\newcommand{\argmax}{\operatornamewithlimits{argmax}}
\newcommand{\argmin}{\operatornamewithlimits{argmin}}
\newcommand{\bd}{\boldsymbol{d}}
\newcommand{\bb}{\boldsymbol{b}}
\newcommand{\R}{\mathbb{R}}
\newcommand{\bzero}{{\boldsymbol{0}}}
\newcommand{\by}{{\boldsymbol{y}}}
\newcommand{\be}{{\boldsymbol{e}}}
\newcommand{\ba}{{\boldsymbol{a}}}
\newcommand{\bw}{{\boldsymbol{w}}}
\newcommand{\bx}{{\boldsymbol{x}}}
\newcommand{\AL}{\boldsymbol{\alpha}}
\newcommand{\BL}{\boldsymbol{\beta}}
\newcommand{\bp}{{\boldsymbol{p}}}
\def\covtype{{\sf covtype}\xspace}
\def\cifar{{\sf cifar}\xspace}
\def\liblinear{{\sf LIBLINEAR}\xspace }
\def\MNIST{{\sf mnist8m}\xspace}
\def\KDDA{{\sf kdda}\xspace}
\def\covtype{{\sf covtype}\xspace}
\def\webspam{{\sf webspam}\xspace}
\def\DBCD{{PBM}\xspace}
\newtheorem{definition}{Definition}
\newtheorem{corollary}{Corollary}
\newtheorem{theorem}{Theorem}
\title{Communication-Efficient Parallel Block Minimization for Kernel Machines}
\author[1]{Cho-Jui Hsieh}
\author[2]{Si Si} 
\author[2]{Inderjit S. Dhillon}
\affil[1]{Departments of Computer Science and Statistics, University of California, Davis}
\affil[2]{Department of Computer Science, University of Texas, Austin}
\date{}
\begin{document}
% \nipsfinalcopy is no longer used

\maketitle

\begin{abstract} 

  Kernel machines often yield superior predictive performance on various tasks; however, they suffer from severe computational challenges. In this paper, we show how to overcome the important challenge of speeding up kernel machines. In particular, we develop a parallel block minimization framework for
solving kernel machines, including kernel SVM and kernel logistic regression. 
Our framework proceeds by dividing the problem into smaller subproblems
  by forming a block-diagonal approximation of the Hessian matrix. 
  The subproblems are then solved approximately in parallel. 
  After that, a communication efficient line search procedure is developed to
  ensure sufficient reduction of the objective function value at each iteration. 
  We prove global linear convergence rate of the proposed method 
  with a wide class of subproblem solvers, and our analysis covers
  strongly convex and some non-strongly convex functions. 
  We apply our algorithm to solve large-scale kernel SVM problems on distributed 
  systems, and show a significant improvement over existing parallel solvers. 
  As an example, on the covtype dataset with half-a-million samples, our algorithm 
  can obtain an approximate solution with 96\% accuracy
  in 20 seconds using 32 machines, while 
all the other parallel kernel SVM solvers require more than 2000 seconds to achieve a solution with 95\% accuracy. 
Moreover, our algorithm can scale to very large data sets, such as the kdd algebra dataset with 8 million samples and 20 million features. 
\end{abstract} 

\section{Introduction}
\label{sec:intro}
Kernel methods are a class of algorithms that map samples from input space to a high-dimensional
feature space. 
The representative kernel machines include  kernel SVM, kernel 
logistic regression and support vector regression.
 %In the high-dimensional space, several machine learning algorithms
%can be applied, including kernel SVM~\cite{CC95a}, kernel 
%logistic regression~\cite{GW98a}, kernel ridge regression~\cite{CS98b}, and
%support vector regression.
All the above algorithms are hard to scale to large-scale problems
due to the high computation cost and memory requirement 
of computing and storing the kernel matrix. 
%In order to tackle the scalability problem of kernel machines, 
Efficient sequential algorithms have been proposed for kernel machines, 
where the most widely-used one is the SMO algorithm~\cite{JP98a,TJ98a} implemented
in LIBSVM and SVMLight. 
%including optimization
%approaches~\cite{JP98a,AB05a,AS07a,HSD14} and kernel approximation 
%approaches~\cite{CW01a,SF01,PD05a,AR08b,SHD14}. 
However, %due to the high computational cost and memory requirement of the kernel matrix, 
single-machine kernel SVM algorithms still cannot scale to larger datasets (e.g. LIBSVM takes 3 days on the MNIST dataset with 8 million samples), 
%
%hence the scalability of kernel machines is still a challenge when facing datasets
%with millions of samples. 
%
%Due to the increasing amount of data in modern applications,  
so there is a tremendous need for developing
distributed algorithms for kernel machines. 

There have been some previous attempts in developing distributed kernel SVM solvers~\cite{EC07a,ZAZ09a}, 
but they converge much slower than SMO, 
and cannot achieve ideal scalability using multiple machines.  
%Due to the increasing need in dealing with large-scale datasets, 
On the other hand, 
other recent distributed algorithms 
%there are many recent distributed 
%algorithms proposed 
for solving linear SVM/logistic regression~\cite{MJ14a,YZ15a,CPL15a} 
cannot be directly applied to kernel machines since they
%either solve
%the primal problem or 
all synchronize information using primal variables. 

In this paper, we propose a Parallel Block Minimization (\DBCD) 
framework for solving kernel machines on distributed systems.
  At each iteration, \DBCD divides the whole problem into smaller subproblems
  by first forming a block-diagonal approximation of the kernel matrix.
  Each subproblem is much smaller than the original problem and they can be solved in parallel using existing serial solvers such as SMO (or greedy coordinate descent)~\cite{JP98a} implemented in LIBSVM. 
  After that, we develop a communication-efficient line search procedure
  to ensure a reduction of the objective function value.

  Our contribution can be summarized below:
\begin{enumerate}[noitemsep,nolistsep,leftmargin=*]
\item We are the first to apply the parallel block minimization (\DBCD) framework for
  training kernel machines, where each machine updates a block of dual 
  variables.  
 % a quadratic and a separable convex function, which can be used for scaling up traditional kernel machines. 
%\item 
  We develop an efficient line search procedure to synchronize the updates at the end of each iteration
  by communicating the current prediction on each training sample. 
  %This is different from previous attempts that
  %communicate primal variables.
\item Although our algorithm works for any partition of dual variables, we show 
  that the convergence speed can be improved using a better partition. 
  As an example, we show the algorithm converges faster if we partition
  dual variables using kmeans on a subset of training samples. 
%\item The convergence speed of block-based distributed algorithms is known to be much slower than serial algorithms~\cite{DM15a,CPL15a,MJ14a}. To address this issue, 
%  we discuss how to partition the data in a better way to improve the convergence of the algorithm. This idea is also potentially useful for distributed solvers for other problems, such as linear SVM/logistic regression. 
\item We show that our proposed algorithm significantly outperforms existing 
  kernel SVM and logistic regression
  solvers on a distributed system.  
\item We prove a global linear convergence rate of \DBCD 
  under mild conditions: we allow a wide range of inexact subproblem solvers, and our 
  analysis covers many widely-used functions where  some of them may not be strongly convex
  (e.g., SVM dual problem with a positive semi-definite kernel). 
%  which is stronger than previous analysis in~\cite{MJ14a,CPL15a} when applied
%  to linear kernel. 
\end{enumerate}
The rest of the paper is outlined as follows. 
We discuss the related work in 
Section~\ref{sec:related}. 
In Section~\ref{sec:setup}, we introduce the problem setting, 
and the proposed \DBCD framework is presented in Section~\ref{sec:proposed}. 
We prove a global linear convergence rate in Section~\ref{sec:convergence}, 
and experimental results are shown in Section~\ref{sec:exp}. We conclude the whole paper in Section~\ref{sec:conclude}.

\section{Related Work}
\label{sec:related}
%\paragraph{Algorithms for Kernel Machines. }
{\bf Algorithms for solving kernel machines. }
Several optimization algorithms have been proposed to solve kernel machines~\cite{JP98a,TJ98a}. 
%, including decomposition methods (or 
%greedy coordinate descent)~\cite{JP98a,TJ98a}, quadratic programming with active
%set~\cite{LK98a,SVNV03a}, finite Newton Method~\cite{OLM02a,SSK05a,SSK06a}, 
%and stochastic gradient descent~\cite{AB05a,ZW12a}. 
Among them, decomposition methods~\cite{TJ98a,SSK03a} have become widely-used in software packages
such as LIBSVM and SVMLight. 
%Recently, \cite{HSD14} proposed a divide-and-conquer
%approach to divide the global problem into smaller subproblems that can be solved efficiently.
%The solution can be directly used for prediction (DC-SVM early prediction), or
%can be used to initialize the global SVM solver. 
%\paragraph{Parallel Algorithms for Kernel Machines. }
Parallelizing kernel SVM has also been studied in the literature. 
Cascade-SVM~\cite{cascadeSVM} proposed to randomly divide the problem into subproblems, but they still need to solve a global kernel SVM
problem in the final stage. Several parallel algorithms have been
proposed to solve the global kernel SVM problem, such as PSVM~\cite{EC07a} (incomplete
Cholesky factorization + interior point method) and 
P-pack SVM~\cite{ZAZ09a} (distributed SGD). 

Two clustering based approaches have been recently developed: Divide-and-conquer SVM (DC-SVM)~\cite{HSD14} 
and Communication Avoiding SVM (CA-SVM)~\cite{YY15a}. 
In DC-SVM, the global problem is hierarchically divided into smaller subproblems by kmeans or kernel kmeans, 
and the subproblems can be solved independently. The solutions can be used directly
for prediction (called DC-SVM early prediction) or can be used to initialize the global SVM solver. 
Although the subproblems can be solved in parallel, they still need a parallel solver
for the global SVM problem in the final stage. Our idea of clustering is similar to DC-SVM~\cite{HSD14}, but here 
we use it to develop a distributed SVM solver, while DC-SVM is just a single-machine serial algorithm. 
The experimental comparisons between our proposed method and DC-SVM are in Figure~\ref{fig:compare_dcsvm}.
CA-SVM modified the clustering step of DC-SVM early prediction to have more balanced partitions. 
DC-SVM (early prediction) and CA-SVM are variations of DC-SVM that uses local solutions directly for prediction.  
Therefore they cannot get the global SVM optimal solution, moreover, their solutions will become worse when using more computers. 
%We have shown the first step of our algorithm is equivalent to DC-SVM early prediction in Section~\ref{sec:ksvm}. 
If we initialize our proposed \DBCD algorithm by $\AL=\bzero$ (where $\AL$ is the dual variables), then the first step of \DBCD is actually
equivalent to the distributed version of DC-SVM early prediction and CA-SVM. However, 
using our algorithm we can further obtain an accurate global SVM solution on a distributed system. 
%In
%Figure~\ref{fig:comparison} we show that 
%for two of the four datasets the global solution is superior
%to local models. 

%\paragraph{Kernel Approximation Approaches. }
{\bf Kernel approximation approaches. }
Another line of approach aims to approximate the kernel matrix
and then solve the reduced size problem. Nystr\"{o}m approximation~\cite{CW01a} 
and random Fourier features~\cite{AR08b} can be used to form
low-rank approximation of kernel matrices, and then the problem can be 
reduced to a linear SVM or logistic regression problem. 
To parallelize these algorithms, a distributed linear SVM or logistic regression
solver is needed. The comparisons are in Figure~\ref{fig:comparison}. Using random Fourier features in kernel machine is also applied in~\cite{Huang_Kernel_DNN_ICASSP2014} and ~\cite{TuRVR16}. Comparing with our proposed method, (1) they only consider solving linear systems, and it is nontrivial for them to solve kernel SVM and logistic regression problems; (2) they cannot obtain the exact solution of kernel machines, since they only use the approximate kernel. 

%\paragraph{Distributed Linear SVM/Logistic regression solvers. }
{\bf Distributed linear SVM and logistic regression solvers. }
Several distributed algorithms have been proposed for solving the primal linear SVM problems~\cite{YZ15a}, but they cannot be directly
applied to the dual problem. 
%\cite{YZ15a} recently proposed a communication-efficient
%Newton-type method, but they also focus 
%on primal problems. 
%and they have been 
%implemented in open source softwares. In contract, we focus on kernel machines that
%has to be solved in the dual form. 
%Another line of researchers that related to our approach is the distributed
%Another line of related work is on distributed dual coordinate descent~\cite{TY13a,MJ14a,CPL15a}
Distributed dual coordinate descent~\cite{TY13a,MJ14a,CPL15a} is also investigated for linear SVM. 
%In terms of algorithmic aspect, 
%These methods use primal variables $\bw$ for synchronization, which cannot be applied in kernel machines because 
%$\bw$ may have infinite dimensionality after the nonlinear kernel mapping. 
%In terms of theoretical aspect, our Theorem~\ref{thm:linear_convergence} 
%is more general than the analysis in~\cite{TY13a,MJ14a,CPL15a} (see Section~\ref{sec:convergence} for 
%details). 
%We prove a global linear convergence rate of our \DBCD algorithm
%  with mild condition: 
%We allow inexact subproblem solvers, and cover
%  SVM objective function with linear kernel (which is not strongly convex), 
%  which is stronger than previous analysis in~\cite{MJ14a,CPL15a} when applied
%  to linear kernel. 

{\bf Distributed Block Coordinate Descent. }
The main idea of our algorithm is similar to 
a class of Distributed Block Coordinate Descent (DBCD) parallel
computing methods. It was discussed recently in many literature~\cite{PR12a,CS12a,DM15a}, 
where each of them have different ways to solve subproblems in each machine, and synchronize the results.
%However, none of existing work applied DBCD to kernel machines, partly due to 
%the hardness of line search in distributed systems.  
%Recently, \cite{DM15a} summarized these algorithms and proposed a distributed
%algorithm for solving $\ell_1$-regularized problems. 
%Actually, recent distributed dual coordinate descent
%linear SVM solvers~\cite{YZ15a,CPL15a} also belong to this category. 

{\bf Our Innovations. }
%Compared with the above algorithms, we have the following innovations: 
%Comparing to the most close work including distributed dual coordinate descent for linear SVM~\cite{CPL15a}
%and Distributed Block Coordinate Descent~\cite{DM15a}, we have the following innovations in this paper: 
In contrast to the above algorithms, ours is the first paper that develops a block-partition based coordinate descent algorithm to kernel machines; 
previous work either focused on linear SVM~\cite{CPL15a,TY13a,MJ14a} or primal ERM problems~\cite{DM15a}. 
These approaches proposed to maintain and synchronize the vector $\by = X^T \bd$ or $\bw=X\bd$, where $\bd$ is the direction and
$X$ is the data matrix. 
Unfortunately, in kernel methods this strategy does not work since each sample may have infinite dimension after the nonlinear 
mapping. We overcome the challenges by synchronizing the $Q\bd$ vector ($Q\in \R^{n\times n}$) and 
developing an efficient line search procedure. We further show that a better 
partition (obtained by kmeans) can speedup the convergence of the algorithm. 
%\begin{compactitem}
%\item Instead of synchronizing $\by$ or $\bw$, we propose to synchronize the $Q\bd$ vector.
%  This also leads to a different line search procedure compared with~\cite{CPL15a}. 
%\item The optimal step size is an extension of~\cite{CPL15a} but it is not straight forward, because their method requires 
%  primal variables $\bw$, while our method is based on the $Q\bd$ vector, where $Q$ is the kernel matrix. 
%\item We are the first to discuss the quality of partition in parallel block minimization algorithms, and we showed a significant improvement by using kmeans partition (see Fig. 1). This may also inspire other DBCD algorithms to use more sophisticated partitions.
%\end{compactitem}

On the theoretical front: previous papers~\cite{TY13a,MJ14a} can only show linear convergence when the objective function is
$f(x)=g(x)+h(x)$ and $g$ is strongly convex. \cite{CPL15a} considered some non-strongly convex functions, but they assume each subproblem is solved exactly. In this paper, we prove a global linear convergence even when $g$ is not strongly convex and each subproblem is solved approximately. Our proof covers general DBCD algorithms for some non-strongly convex functions, for which previous analysis can only show sub-linear convergence. 
%(including dual linear SVM, high dimensional primal least squares and logistic regression), for which previous analysis can only show sub-linear convergence. 

%achieve faster convergence rate, while
%DC-SVM used kmeans clustering to minimize the error between local solutions and global solutions. Moreover, DC-SVM is just a 
%single-machine serial algorithm, and in this paper we develop a distributed SVM solver. 
%The experimental comparisons are in Figure~\ref{fig:compare_dcsvm}. 
\section{Problem Setup}
\label{sec:setup}
%  We develop a parallel block minimization (\DBCD) algorithm
%for solving the following composite optimization problem:  
We focus on the following composite optimization problem: 
\begin{equation}
  \argmin_{\AL\in \R^n} \! \big\{\AL^T \!Q \AL + \sum_i \!
  g_i(\alpha_i)\big\}\!:=\!f(\AL)  \text{ s.t. } \ba \!\leq \!\AL\! \leq\! \bb, 
  \label{eq:main_pb}
\end{equation}
where $Q\in \R^{n\times n}$ is positive semi-definite
%$\ba, \bb\in \R^n$ are constants (can be $\infty$ 
%or $-\infty$), 
and each $g_i$
is a univariate convex function. Note that we can easily handle the box constraint $\ba \leq \AL\leq \bb$ 
by setting
$g_i(\alpha_i)=\infty$ if $\alpha_i\notin[a_i,b_i]$, so we will omit the constraint
in most parts of the paper. 
%We require $Q$ to be a positive semi definite matrix
%(not necessary positive definite), and each $g_i(\cdot)$ is a convex function. 
%Many machine learning problems can be formulated as~\eqref{eq:main_pb}, including
%linear regression, Lasso regression, and kernel machines. 

An important application of~\eqref{eq:main_pb} in machine learning is that it is
the dual problem of $\ell_2$-regularized empirical risk minimization. 
Given a set of instances $\{\bx_i, y_i\}_{i=1}^n$, 
we consider the following $\ell_2$-regularized empirical risk minimization problem: 
\begin{equation}
  \arg\min_{\bw} \frac{1}{2}\bw^T \bw + C \sum\nolimits_{i=1}^n \ell_i( \bw^T \Phi(\bx_i)), 
  \label{eq:erm}
\end{equation}
where $\ell_i$ is the loss function depending on the label $y_i$, and $\Phi(\cdot)$ is 
the feature mapping. For example, $\ell_i(u) = \max(0, 1-y_i u)$  for SVM with hinge loss. 
%and $\ell_i(u) = \log(1+\exp(-y_i u))$ for regularized logistic regression. 
The dual problem of~\eqref{eq:erm} can be written as
\begin{equation}
  \arg\min_{\AL} \frac{1}{2}\AL^T Q \AL + \sum\nolimits_{i=1}^n \ell_i^*(-\alpha_i),
  \label{eq:erm_dual}
\end{equation}
where $Q\in \R^{n\times n}$ in this case is the kernel matrix with $Q_{ij}=y_i y_j\Phi(\bx_i)^T \Phi(\bx_j)$. 
Our proposed approach works in the general setting, but we will discuss in more detail its applications
to kernel SVM, where $Q$  is the kernel matrix and $\AL$ is the vector
of dual variables. Note that, similar to~\cite{HSD14}, we ignore the bias term in~\eqref{eq:erm}. Indeed, 
in our experimental results we did not observe improvement
in test accuracy by adding the bias term. 

\section{Proposed Algorithm}
\label{sec:proposed}
We describe our proposed framework \DBCD for solving~\eqref{eq:main_pb} on a 
distributed system with $k$ worker machines. We partition the variables $\AL$ into 
$k$ disjoint index sets $\{S_r\}_{r=1}^k$ such that 
\begin{equation*}
  S_1\cup S_2 \cup \cdots \cup S_k = \{1, \dots, n\} \ \ \text{ and } \  \
  S_p\cap S_q = \phi \ \forall p\neq q, 
\end{equation*}
and we use $\pi(i)$ to denote the cluster indicator that $i$ belongs to. 
We associate each worker $r$ 
with a subset of variables $\AL_{S_r} :=\{\alpha_i \mid i\in S_r\}$. 
Note that our framework allows any partition, and we will discuss how to obtain
a better partition in Section~\ref{sec:clustering}. 

At each iteration, we form the quadratic approximation of problem~\eqref{eq:main_pb} around the current solution:
\begin{align}
  f(\AL+\Delta \AL) \! \approx \! \bar{f}_{\AL}(\Delta\AL) \! = \!& \frac{1}{2} \AL^T Q \AL \! + \! \AL^T Q \Delta\AL \! + \!
  \frac{1}{2}\Delta\AL^T \bar{Q} \Delta\AL
    + \sum_i g_i(\alpha_i + \Delta 
  \alpha_i),  
  \label{eq:q_approx}
\end{align}
where the second order term of the quadratic part ($\Delta\AL^T Q \Delta\AL$) is replaced by 
$\Delta\AL^T \bar{Q}\Delta\AL$, 
and
$\bar{Q}$ is the block-diagonal approximation of $Q$ such that 
\begin{equation}
  \bar{Q}_{ij} = \begin{cases} Q_{ij} &\text{ if } \pi(i) = \pi(j) \\
    0 &\text{ otherwise}. 
  \end{cases}
\end{equation}
By solving~\eqref{eq:q_approx}, we obtain the descent direction $\bd$: 
\begin{equation}
  {\bd} :=  \arg\min_{\Delta\AL} \bar{f}_{\AL}(\Delta\AL). 
  %\frac{1}{2} (\AL+\Delta\AL) \bar{Q} (\AL+\Delta\AL) 
  %+ \sum_{i=1}^n g_i(\alpha_i + \Delta\alpha_i) \text{ s.t. } \ba \leq \AL+\Delta\alpha_i \leq \bb, 
  \label{eq:d_compute}
\end{equation}
%where $\bar{Q}$ is the block-diagonal approximation of $Q$ such that 
%\begin{equation}
%  \bar{Q}_{ij} = \begin{cases} Q_{ij} \text{ if } \pi(i) = \pi(j) \\
%    0 \text{ otherwise}. 
%  \end{cases}
%\end{equation}
Since $\bar{Q}$ is block-diagonal, problem~\eqref{eq:d_compute} can be decomposed into
$k$ independent subproblems based on the partition $\{S_r\}_{r=1}^k$: 
\begin{equation}
  \hspace{-2pt} \bd_{S_r} \!\! = \! \argmin_{\Delta\AL_{S_r}} \!  \big\{ \!
  %\Delta\AL_{S_r}^T (Q \AL)_{S_r} +
  \frac{1}{2}\Delta\AL_{S_r}^T Q_{S_r, S_r}\! \Delta\AL_{S_r} \! + \! 
  \sum_{i\in S_r} \! \bar{g}_i(\Delta\alpha_i) \! \big\} \! := \! f^{(r)}_{\AL}(\!\Delta\AL_{S_r}\!) 
%  &\text{ s.t. } 
%  a_i \leq \alpha_i + \Delta \alpha_i \leq b_i \ \forall i\in S_r.  
  \label{eq:subpb}
\end{equation}
where 
$\bar{g}_i(\Delta\alpha_i) = g_i(\alpha_i + \Delta\alpha_i) + (Q\AL)_{i} \Delta\alpha_i$
. Subproblem~\eqref{eq:subpb} has the same form as the original 
problem~\eqref{eq:main_pb}, so can be solved by any existing solver. 
The descent direction $\bd$ is the concatenation of 
$\bd_{S_1}, \dots \bd_{S_r}$. Since $f(\AL+\bd)$ might even increase the 
objective function value $f(\AL)$, we 
%use a line search procedure to 
find the step size
$\beta$ to ensure the following
sufficient decrease condition of the objective function value: 
\begin{equation}
  f(\AL + \beta \bd) - f(\AL)\leq  \beta \sigma \Delta, 
  \label{eq:line_search_condition}
\end{equation}
where $\Delta=\nabla f(\AL)^T\bd$, and $\sigma\in(0,1)$ is a constant. We
then update $\AL\leftarrow \AL+\beta \bd$. 
Now we discuss details 
of each step of our algorithm. 
%Therefore, our algorithm has two steps: 
%{\bf step 1}: compute $\bd$ by solving~\eqref{eq:subpb}. 
%{\bf step 2}: find a step size to satisfy~\eqref{eq:line_search_condition}.  
%Now we analyze the communication cost of our algorithm. 
%
%\subsection{Communication and Memory Cost}

\subsection{Solving the Subproblems}
\label{sec:subproblem_solver}
%Let $\AL_t$ denote the current solution. 
%For solving subproblems~\eqref{eq:subpb}, it is clear that $(Q\AL_t)_{S_r}$ is 
%the only global information needed, and this can be easily obtained by 
%$(Q\AL_t)_{S_r} = (Q\AL_{t-1})_{S_r} + \beta (Q\bd_{t-1})_{S_r}$
%if $(Q\bd_{t-1})_{S_r}$ is given from the previous iteration.
%
Note that subproblem~\eqref{eq:subpb} has the same form as the original 
problem~\eqref{eq:main_pb}, so we 
can apply any existing algorithm to solve it for each worker independently.  
%For example, 
%the greedy coordinate descent algorithm~\cite{JP98a} used in LIBSVM
%can be applied. 
%in this 
%paper we focus on the application to kernel machines, and 
%the decomposition method~\cite{JP98a} (or equivalent saying, greedy coordinate descent)
%is the most widely used technique. 
%When each worker solves~\eqref{eq:subpb} by greedy coordinate descent, 
In our implementation, we apply the following greedy coordinate descent method (a similar algorithm was used in LIBSVM). Assume the current subproblem solution is $\Delta\AL_{S_r}$, we choose variable with the largest projected
gradient: 
\begin{align}
  i^* \!:=
  \argmax_{i\in S_r} \big| \Pi \big(\alpha_i\!+\!\Delta\alpha_i \!-\! (Q_{S_r, S_r} \Delta\AL_{S_r})_{i} \!-\! \bar{g}_i'(\Delta\alpha_i)\big) 
    -\! \alpha_i\!-\!\Delta\alpha_i\big|
  %(Q_{S_r, S_r} \Delta\AL_{S_r} )_i 
  %+ \bar{g}_i'(\alpha_i+\Delta\alpha_i)-\alpha_i, 
  \label{eq:selection}
\end{align}
where $\Pi$ is the projection to the interval $[a_i, b_i]$. 
The selection only requires $O(|S_r|)$ 
time if $Q_{S_r, S_r}\Delta\AL_{S_r}$ is maintained in local memory. 
Variable $\Delta\alpha_i$ is then updated by solving the following one-variable 
subproblem:
\begin{align}
  \delta^*  = \argmin_{\delta: a_{i^*} \leq \alpha_{i^*}+\Delta\alpha_{i^*}+\delta \leq b_{i^*}} \ \frac{1}{2}Q_{i^*, i^*}\delta^2 + (Q_{S_r, S_r}\Delta\AL_{S_r})_{i^*} \delta
  + \bar{g}_{i^*} (\Delta\alpha_{i^*}+\delta). 
  \label{eq:one-variable-update}
\end{align}
%\begin{align}
%  \Delta\alpha_{i^*} &\leftarrow \argmin_{\delta: a_{i^*} \leq \alpha_i+\Delta\alpha_i+\delta \leq b_{i^*}} 
%  \frac{1}{2}(\Delta\AL_{S_r}+\delta \be_{i^*}) Q_{S_r, S_r} (\Delta\AL_{S_r}+\delta
%  \be_{i^*}) + \bar{g}_{i^*}(\Delta\alpha_{i^*}+ \delta) \nonumber \\
%  &= \argmin_{\delta: a_{i^*} \leq \alpha_i+\Delta\alpha_i+\delta \leq b_{i^*}} \frac{1}{2}\delta^2 + (Q_{S_r, S_r}\Delta\AL_{S_r}) \delta
%  + \bar{g}_{i^*} (\Delta\alpha_{i^*}+\delta)
%  \label{eq:one-variable-update}
%\end{align}
For kernel SVM, the one-variable subproblem~\eqref{eq:one-variable-update}
has a closed form solution, while for logistic regression
the subproblem can be solved by Newton's method (see~\cite{HFY10a}). 
The bottleneck of both~\eqref{eq:selection} and~\eqref{eq:one-variable-update}
is to compute $Q_{S_r, S_r}\Delta\AL_{S_r}$, which can be maintained
after each update using $O(|S_r|)$ time. 
%Other algorithms such as randomized coordinate descent or cyclic coordinate
%descent~\cite{SSS12a} can also be applied, which would be efficient for linear kernel. 
%, where at each step the coordinate
%is selected by random or cyclic order. Those algorithms are efficient when solving
%linear kernel (see~\cite{CJH08a}). 

%This algorithm has $O(|S_r|)$ time complexity and $O(|S_r|^2)$ memory 
%requirement. 
Note that in our framework each subproblem does not need to be solved exactly. 
In Section \ref{sec:convergence} we will give theoretical analysis to the
in-exact \DBCD method, and show that the linear convergence is guaranteed
when each subproblem runs more than 1 coordinate update. 

%Let $\AL_t$ denote the current solution. 
%For solving subproblems~\eqref{eq:subpb}, it is clear that $(Q\AL_t)_{S_r}$ is 
%the only global information needed, and this can be easily obtained by 
%$(Q\AL_t)_{S_r} = (Q\AL_{t-1})_{S_r} + \beta (Q\bd_{t-1})_{S_r}$
%if $(Q\bd_{t-1})_{S_r}$ is given from the previous iteration.

\paragraph{Communication Cost. } There is no communication needed for solving 
the subproblems between workers; however, after solving the subproblems and obtaining $\bd$,  
%for line search and forming the next subproblem, 
each worker needs to obtain the updated $(Q\bd)_{S_r}$ vector for next iteration. 
Since each worker only has local $\bd_{S_r}$, we compute $Q_{:,S_r}(\bd_{S_r})$ 
in each worker,  
and use a {\sc Reduce\_Scatter} collective communication to obtain updated $(Q\bd)_{S_r}$
for each worker. 
The communication cost for the collective {\sc Reduce\_Scatter} operation 
for an $n$-dimensional 
vector requires
\begin{equation}
  \log(k) T_{\text{initial}} +  n T_{\text{byte}}
\end{equation}
communication time, where $T_{\text{initial}}$ is the message startup time
and $T_{\text{byte}}$ is the transmission time per byte (see Section 6.3 of~\cite{EC07b}). 
When $n$ is large, the second term usually dominates, so we need $O(n)$ communication
time and this {\em does not grow with number of workers}. 

%If there are $q_r$ nonzero elements in $\Delta\AL_{S_r}$, then
%we need $O(nq_r)$ time complexity if $Q_{:, S_r}$ 
%is stored in the memory for each worker ($O(n q_r)$ memory requirement). 
%If memory is not enough, we follow LIBSVM to maintain a kernel cache and recompute 
%$Q_{ij}$ if needed.

\subsection{Communication-efficient Line Search} 
\label{sec:line_search}
After obtaining $(Q\bd)_{S_r}$ for each worker, we
propose the following efficient line search approaches.
%Earlier work 
%  on distributed linear SVM solvers usually set a fixed step
%  size~\cite{TY13a,MJ14a}, and only recently~\cite{CPL15a} proposed an efficient line search
%  for distributed linear SVM by synchronizing primal variables, which cannot be directly applied for kernel machines. 
  %. Our algorithm
  %is different from~\cite{CPL15a} since we focus on kernelized problems where the primal variables could be infinite dimensional. 
% show below that the line search
%procedure only requires $O(1)$ communication time. 

{\bf Armijo-rule based step size selection. }
For general $g_i(\cdot)$, a commonly used line search approach is to 
%adopt 
%the Armijo-rule backtracking line search---
try step
sizes $\beta \in \{1, \frac{1}{2},  \dots\}$ until $\beta$
satisfies the sufficient decrease condition~\eqref{eq:line_search_condition}. 
The only cost is to evaluate the objective function value. For each choice 
of $\beta$, $f(\AL+\beta\bd)$ can be computed as 
\begin{align*}
  f(\AL + \beta \bd) = f(\AL) + \sum_r \{\beta \bd_{S_r}^T (Q \AL)_{S_r} + \frac{1}{2} \beta^2 \bd^T_{S_r} (Q \bd)_{S_r} 
   + \sum_{i\in S_r} g_i(\alpha_i+\beta d_i)-g_i(\alpha_i)\}, 
\end{align*}
so if each worker has the vector $(Q\bd)_{S_i}$, we can compute
$f(\AL+\beta\bd)$ using $O(n/k)$ time and $O(1)$ 
communication cost. In order to prove
convergence, 
we add another condition that 
$f(\AL+\beta \bd) \leq f(\AL+\bd/k)$, 
although in practice we do not find any difference without adding this condition. 
\begin{figure*}[tb]
  \centering
  \begin{tabular}{cccc}
  \hspace{-10pt}  \subfloat[\webspam obj]{\includegraphics[width=0.25\textwidth]{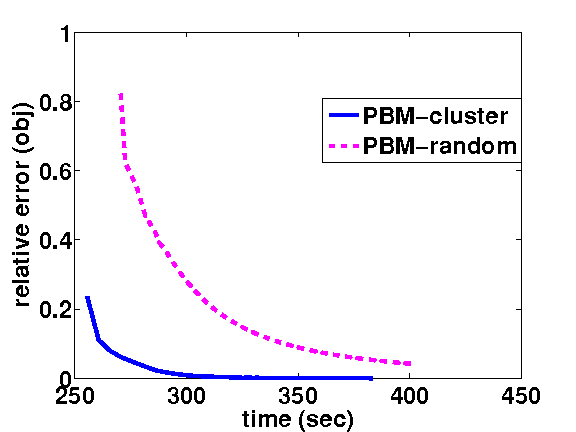}}
  & \hspace{-10pt}
  \subfloat[\webspam accuracy]{\includegraphics[width=0.25\textwidth]{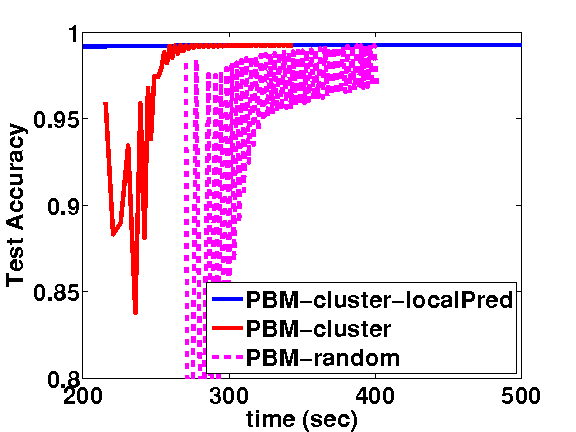}}
  &\hspace{-10pt}
  \subfloat[\covtype obj]{\includegraphics[width=0.25\textwidth]{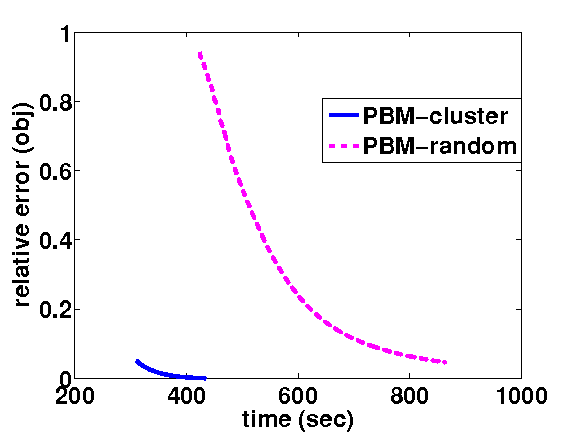}}
  &\hspace{-10pt}
  \subfloat[\covtype accuracy]{\includegraphics[width=0.25\textwidth]{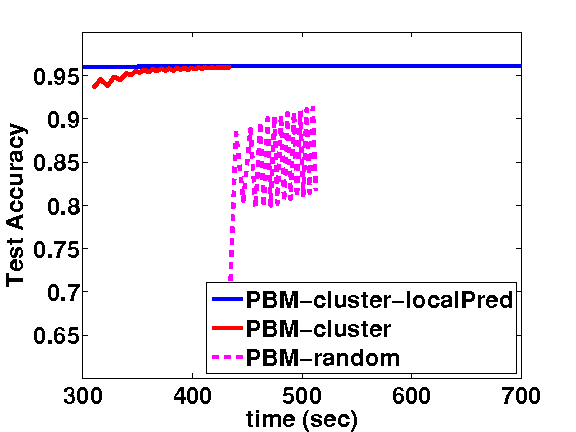}}
\end{tabular}
\caption{\small Comparison of different variances of \DBCD. We observe \DBCD with kmeans partitioning converges faster than random partition, and the accuracy can 
further be improved
by using our local prediction heuristic. 
%\DBCD-random uses random partition of data points, which performs the worst. \DBCD-cluster use kmeans partitioning and converges much faster than \DBCD-random. 
%\DBCD-localPred further applies a local prediction heuristic on top of \DBCD-cluster to get better prediction accuracy in the early stage.
\label{fig:compare_random} }
\end{figure*}

{\bf Optimal step size selection. }
If each $g_i$ is a linear function with bounded constraint (such as for the 
kernel SVM case), the optimal step size
can be computed without communication. 
%We show that the can be computed
%with the same computational and communication cost if each $g_i$ is a linear 
%function. 
The optimal step size is defined by 
\begin{equation}
  \beta^*_t := \arg\min_{\beta} f(\AL + \beta \bd) \text{ s.t. } \ba \leq \AL 
  +\beta \bd\leq \bb. 
\end{equation}
If $\sum_i g_i(\alpha_i) = \bp\AL$, then $f(\AL+\beta\bd)$ with respect to 
$\beta$ is a univariate quadratic function, and thus $\beta^*_t$
can be obtained by the following closed form solution:  
\begin{equation}
  \beta = \min(\bar{\eta}, \max(\underline{\eta}, -(\AL^T Q \bd + \bp^T 
  \bd)/(\bd^T Q \bd) )), 
\end{equation}
where $\bar{\eta} := \min_{i=1}^n (b_i - \alpha_i)$ and 
$\underline{\eta} := \max_{i=1}^n (a_i - \alpha_i )$. This can also be
computed in $O(n/k)$ time and $O(1)$ communication time. 
%if $(Q\bd)_{S_r}$ 
%is stored in each work $r$.
Our proposed algorithm is summarized in Algorithm~\ref{alg:dbcd}.
%    {\LinesNotNumbered 
    \begin{algorithm}[tbh]
  \DontPrintSemicolon
  \caption{\DBCD: Parallel Block Minimization for solving~\eqref{eq:main_pb}\label{alg:dbcd}}
  \SetKwInOut{Input}{Input}\SetKwInOut{Output}{Output}
  \Input{The objective function~\eqref{eq:main_pb}, initial $\AL_0$.  }
  \Output{The solution $\AL^*$.  }
  Obtain a disjoint index partition $\{S_r\}_{r=1}^k$. \;
  Compute $Q\AL_0$ in parallel ($Q\AL_0=\bzero$ if $\AL_0=0$). \;
  \For{$t=0, 1, \ldots$(until convergence)}{
  Obtain $\bd_{S_r}$ by solving subproblems~\eqref{eq:subpb} {\bf in parallel}. \;
  Compute $Q_{:,S_r}\bd_{S_r}$ {\bf in parallel}. \;
    Use {\sc Reduce\_Scatter} to obtain $(Q\bd)_{S_r}$ in each worker. \;
    Obtain the step size $\beta$ using line search (see Section 
    \ref{sec:line_search} for details) \;
    $\AL_{S_r}\leftarrow \AL_{S_r}+\beta \bd_{S_r}$ and 
    $(Q\AL)_{S_r} \leftarrow (Q\AL)_{S_r}+\beta (Q\bd)_{S_r}$ {\bf in parallel}. \;
  }
\end{algorithm}
%}
%Note that our algorithm is very communication-efficient: we only needs to synchronize
%a vector of $Q\bd_t$ (with size $O(n)$) and some other $O(1)$  messages at each iteration.
\subsection{Variable Partitioning}
\label{sec:clustering}
Our \DBCD algorithm converges for any choice of partition $\{S_r\}_{r=1}^k$. 
However, it is important to select a good partition in order to achieve 
faster convergence. 
Note that if $\bar{Q}=Q$ in subproblem~\eqref{eq:q_approx}, 
then $\bar{f}_{\AL}(\Delta\AL)=f(\AL+\Delta\AL)$, so the algorithm
converges in one iteration. Therefore, to achieve faster convergence, 
we want to minimize the difference between $\bar{f}_{\AL}(\Delta\AL)$ and 
$f(\AL+\Delta\AL)$, and this can be solved by finding a partition $\{S_r\}_{r=1}^k$ 
to minimize error
%\begin{equation*}
 $ \|\bar{Q} - Q\|_F^2 = \sum\nolimits_{i,j} Q_{ij}^2 - \sum\nolimits_{r=1}^k\sum\nolimits_{i,j\in S_r} 
  Q_{ij}^2,  
 $
%\end{equation*}
and the minimizer can be obtained by maximizing the second term. However, we also
want to have a balanced partition in order to achieve better parallelization 
speedup. 
%Therefore, a common approach is to apply
%spectral clustering algorithms, where the above error
%is normalized by the partition sizes. 

The same problem has been encountered in~\cite{SHD14,HSD14} for forming a good kernel approximation. 
They have shown that kernel kmeans can achieve a good partition for general kernel matrix, 
and if the kernels are shift-invariant (e.g., Gaussian, Laplacian, \dots), we can use kmeans 
clustering on the data points to find a balanced partition with small kernel approximation error. 
Since we do not need the ``optimal'' partition, in practice we only run kmeans or kernel kmeans
on a subset of 20000 training samples, so the overhead is very small compared with the whole 
training procedure. 

%When $Q$ is the kernel matrix, e.g., in kernel SVM, 
%then the problem is equivalent to finding a good block diagonal approximation
%for the kernel matrix. 
%For shift-invariant kernels where the $K(\bx_i, \bx_j)$ is inverse proportional to 
%the distance $\|\bx_i - \bx_j\|$, we can use kmeans clustering on the data points to maximize
%the between-cluster distance, which in some sense is a proxy to the
%kernel approximation error $\|\bar{Q}-Q\|_F^2$. The same idea was used
%in~\cite{SHD14} to form a memory-efficient kernel approximation, which is a different task. 
%The same problem has been discussed in~\cite{HSD14,SHD14}, 
%and they showed that kmeans algorithm
%can be used for shift-invariant kernels, and
%kernel kmeans (on a subset of samples) algorithm can be used for a general kernel.
%However, kernel kmeans is time consuming, so %they proposed to run
%kernel kmeans on a subset of samples. 
%For shift-invariant kernels where the kernel %value only 
%depends on $\bx_i-\bx_j$ ($Q_{ij}=K(\bx_i, \bx_j)=h(\bx_i-\bx_j)$ 
%for some function $h$), \cite{SHD14} showed that a good partition 
%can be obtained by the kmeans algorithm on the input space. 

We observe \DBCD with kmeans partition 
converges much faster compared to random partition.  
%that a good partition obtained by kmeans
%makes the convergence of \DBCD much faster. 
%We use kernel
%SVM with Gaussian kernel as an example. 
In Figure~\ref{fig:compare_random}, we test the \DBCD algorithm
on the kernel SVM problem with Gaussian kernel, and show that the convergence
is much faster when the partition is obtained by kmeans clustering.
%In this example, the random partition gives 0.64 kernel approximation error, 
%while the kmeans partition gives 0.047 kernel approximation error. This verified 
%that a better kernel approximation error leads to a faster convergence. 
%Note that previous work for parallel linear SVM solvers~\cite{CPL15a,MJ14a} 
%all use random partition. The oscillatory behavior of \DBCD-random in Figure~\ref{fig:compare_random}
%was also observed in~\cite{CPL15a} for solving linear SVM problems. 

%\begin{itemize}
%    \item Comparison between random partition and kmeans partition 
%    (convergence). 
%\item Previous work in linear SVM solvers use random partition because 
%    solving linear SVM is too fast. 
%
%  \end{itemize}

%\subsection{Implementation Issues for kernel SVM}
%\label{sec:ksvm}
%
{\bf Local Prediction. }
Inspired by the early prediction strategy discussed in~\cite{HSD14}, 
we propose a local prediction strategy for \DBCD 
when data is partitioned by kmeans clustering. 
Let $\AL_t$ be the solution before the $t$-th iteration of 
Algorithm~\ref{alg:dbcd}, and $\bd_t$ be the solution of the quadratic subproblem
\eqref{eq:q_approx}. The traditional way is to use $\AL_{t+1} = \AL_t + \beta_t\bd_t$
for predicting new data. 
%However, since the training is conducted in a distributed
%manner, and each machine fit the local model by $\AL_T + (\bd_t){S_r}$, 

However, we find the following procedure gives better prediction accuracy compared to using $\AL_{t+1}$: 
we first identify the cluster indicator of the test point $\bx$ by choosing the nearest kmeans center. 
If $\bx$ belongs to the $r$-th cluster, we then 
compute the prediction by the local model $\AL_t + (\bd_t)_{<S_r>}$, 
where $(\bd_t)_{<S_r>}$ is an $n$ dimensional vector that sets all the elements outside $S_r$ to be 0. 
Experimental results in Figure~\ref{fig:compare_random} show that this local prediction strategy 
is generally better than predicting by $\AL_{t+1}$. The main reason is that during the optimization
procedure each local machine fits the local data by $\AL_t + (\bd_t)_{<S_r>}$, so the prediction
accuracy is better than the global model. 
%Note that at the first iteration, $\AL_1 = \bzero$, 
%and this local prediction will be equivalent to the DC-SVM early prediction proposed in~\cite{HSD14}. 

{\bf Summary: Computational and Memory Cost:  }
In Section~\ref{sec:subproblem_solver}, we showed that the greedy coordinate descent solver only requires $O(|S_r|)$ time 
complexity per inner iteration. Before communication, we need to compute $Q_{:,S_r}\bd_{S_r}$ in each machine, 
which requires $O(tn)$ time complexity, where $t$ is number of inner iterations. The line search requires only $O(n/k)$ time. 
Therefore, the overall time complexity is $O(tn)$ for totally $O(tk)$ coordinate updates from all workers, 
so the average time per update is $O(n/k)$, which is $k$ times faster than the original greedy coordinate descent (or SMO) algorithm
for kernel SVM, where $k$ is number of machines. 

The time for running kmeans is $O(\bar{n}\bar{d})$ using $k$ computers, where $\bar{n}$ is number subsamples (we set it to 20,000). 
This is a one time cost and is very small comparing to the cost of solving kernel machines. For example, on Covtype dataset
the kmeans step only took 13 seconds, while the overall training time is 772 seconds. 
Note that {\bf we include the clustering time in all the experimental results}. 

{\bf Memory Cost: }
In kernel SVM, the main memory bottleneck
is the memory needed for storing the kernel matrix. 
Without any trick (such as shrinking) to reduce the kernel storage, the space complexity is $O(n^2)$ for kernel SVM (otherwise
we have to recompute kernel values when needed). 
Using our approach, (1) the subproblem solver only requires $O(n^2/k^2)$ space for the sub-matrix of kernel $Q_{S_r, S_r}$
(2) Before synchronization, 
computing $Q_{:, S_r}\bd_{S_r}$ requires $O(n^2/k)$ kernel entries. Therefore, the memory requirement will be reduced from
$n^2$ to $n^2/k$ using our algorithm. However, for large datasets the memory is still not enough. Therefore, similar to the LIBSVM software, 
we maintain a kernel cache to store columns of $Q$, and maintain the cache using the Least Recent Used (LRU)
policy. In short, if memory is not enough, we use the kernel caching technique (implemented in LIBSVM) that computes the kernel values on-the-fly when it is not in the kernel cache and maintains recently used values in memory.

%One trade-off is that we need to store the whole dataset into memory of each machine, so the memory requirement for each machine 
%is $O(n^2/k + n\bar{d})$, where $\bar{d}$ is the average number of features for each sample.
If each machine contains all the training samples, then $Q_{:, S_r}\bd_{S_r}$ 
can be computed in parallel without any communication, and we only need one
REDUCE\_SCATTER mpi operation to gather the results. However, if each machine only contains a subset of samples, 
they have to broadcast $\{\bx_i \mid i\in S_r, d_i\neq 0\}$ to other machines so that each machine $q$ can compute $(Q\bd)_{S_q}$. 
In this case, we do not need to store the whole training data in each machine, 
and the communication time will be proportional to  
number of support vectors, which is usually much smaller than $n$.

%Of course, the techniques for reducing memory requirement in SMO algorithms can also be used here. For example, 
%if we use the shrinking technique in SMO to temporary discard a subset of non-support vectors, the memory requirement will reduce
%to $O(n_{\text{active}} (n/k+\bar{\bd}) )$, where $n_{\text{active}}$ is number of active samples. 

%\begin{figure}[htb]
%  \centering
%\begin{tabular}{cc}
%  \subfloat[webspam accuracy]{\includegraphics[width=0.3\textwidth]{}}
%  &
%  \subfloat[covtype accuracy]{\includegraphics[width=0.3\textwidth]{}}
%\end{tabular}
%\caption{Local prediction vs global prediction.  }
%\end{figure}

\section{Convergence Analysis}
 \label{sec:convergence}
In this section we show that \DBCD has a global linear convergence rate 
under certain mild conditions. 
Note that our result is stronger than some recent theoretical analysis of
distributed coordinate descent. Compared to~\cite{MJ14a}, we show linear convergence
even when the objective function~\eqref{eq:main_pb} is {\it not} strictly 
positive definite (e.g., for SVM with hinge loss), while ~\cite{MJ14a} only has sub-linear
convergence rate for those cases. 
In comparing to~\cite{CPL15a}, they 
assume that the subproblems in each worker are solved exactly, while we allow an approximate
subproblem solver (e.g., coordinate descent with $\geq 1$ steps).

First we assume the objective function satisfies the following property: 
\begin{definition}
  Problem~\eqref{eq:main_pb} admits a ``global error bound'' if there is 
  a constant $\kappa$ such that 
  \begin{equation}
    \|\AL-P_S(\AL)\| \leq \kappa \|T(\AL)-\AL\|, 
    \label{eq:global_error}
  \end{equation}
  where $P_S(\cdot)$ is the Euclidean projection to the set $S$ of
  optimal solutions, and $T:\R^n\rightarrow \R^n$ is the operator defined by 
  \begin{equation}
    T_i(\AL) = \arg\min_{u} f(\AL + (u-\alpha_i)\be_i), \ \ \forall i=1, 
    \dots, n. 
    \label{eq:T_def}
  \end{equation}
  where $\be_i$ is the standard $i$-th unit vector. We say that the algorithm
  satisfies a global error bound from the beginning if~\eqref{eq:global_error}
  holds for the level set $\{\AL\mid f(\AL)\leq f(\bzero)\}$. 
  %all iterates $\{\AL^j\}_{j=1, 2, \dots}$ generated by the 
  %algorithm. 
  \label{def:global_error}
\end{definition}
The definition is similar to~\cite{HYD15,PWW13a}.   It has been shown in ~\cite{HYD15,PWW13a} that the global error bound assumption covers many widely used machine learning functions. Any strongly convex function satisfies this definition, and in those cases, $\kappa$ is the condition number. Next, we discuss some problems that admit a global error bound: 
\begin{corollary}
  The algorithm for solving~\eqref{eq:main_pb} satisfies a global error bound 
  from the beginning if one of the following conditions is true. 
  \begin{compactitem}
  \item $Q$ is positive definite (kernel is positive definite).
  \item For all $i=1, \dots, n$, $g_i(\cdot)$ is strongly convex for all the 
    iterates (e.g., dual $\ell_2$-regularized logistic regression with positive semi-definite kernel).  
  \item $Q$ is positive semi-definite and $g_i(\cdot)$ is linear for all $i$ 
    with a box constraint (e.g., dual hinge loss SVM with positive semi-definite 
    kernel). 
  \end{compactitem}
  \label{cor:functions}
\end{corollary}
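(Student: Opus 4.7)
My plan is to dispatch the three bullets separately, leaning on the remark right before the statement that any strongly convex $f$ automatically satisfies the global error bound in Definition~\ref{def:global_error}. Bullets one and two should be almost immediate consequences of this fact; the real work is in bullet three, the SVM case, where $Q$ is only positive \emph{semi}-definite and each $g_i$ is linear.

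For bullet one, if $Q\succ 0$ with smallest eigenvalue $\mu>0$, then $\AL^T Q\AL$ is $\mu$-strongly convex, and adding the convex $g_i$ preserves strong convexity, so $f$ is $\mu$-strongly convex and the bound follows. For bullet two, each $g_i$ is strongly convex on the sublevel set $\{f(\AL)\le f(\bzero)\}$ while $\AL^T Q\AL$ is convex (since $Q\succeq 0$), so again $f$ is strongly convex on the level set and the argument of bullet one applies.

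For bullet three I would follow the template of~\cite{HYD15,PWW13a}. The steps I anticipate are: (i) show that $Q\AL^{*}$ takes the same value $\bv^{*}$ at every optimizer $\AL^{*}\in S$; this uses convexity of $\AL\mapsto\AL^T Q\AL$ together with the fact that $\sum_i g_i$ is linear on the box, so that any convex combination of two optimizers is again optimal and the Jensen gap $0$ forces $Q(\AL^{*}-\AL^{**})=\bzero$. (ii) Using (i) plus the KKT conditions, describe $S$ as the intersection of the box $\ba\le\AL\le\bb$ with the linear system $Q\AL=\bv^{*}$ and the linear-complementarity relations that come from the linear cost; this makes $S$ a polyhedron. (iii) Apply Hoffman's lemma to get $\|\AL-P_S(\AL)\|\le C\cdot r(\AL)$, where $r(\AL)$ is the usual residual in the defining (in)equalities of that polyhedron. (iv) Finally, translate $r(\AL)$ into $\|T(\AL)-\AL\|$ coordinatewise: each $T_i(\AL)$ is the closed-form minimizer of a one-dimensional quadratic restricted to $[a_i,b_i]$, so $|T_i(\AL)-\alpha_i|$ majorizes the $i$-th coordinate of the KKT residual, and summing over $i$ gives the bound with an updated constant $\kappa$.

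The main obstacle will be step (iv). Passing the polyhedral Hoffman bound back through the operator $T$ needs a careful case split at the box boundary, where an active constraint may flip and the subgradient of $g_i$ jumps; this is precisely where the arguments of \cite{HYD15,PWW13a} spend most of their technical effort, and I would reproduce that coordinatewise analysis in the notation of~\eqref{eq:main_pb} rather than reinvent it. Step (i) is also slightly delicate because it relies on the linearity of $g_i$ to rule out any curvature in the objective along the optimal face; it would fail if $g_i$ were merely convex but nonlinear without strong convexity, which is why bullets two and three are stated as alternative hypotheses.
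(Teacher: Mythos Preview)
Your proposal is correct and, if anything, supplies more than the paper does: the paper gives no proof of Corollary~\ref{cor:functions} at all. It simply remarks that ``it has been shown in~\cite{HYD15,PWW13a} that the global error bound assumption covers many widely used machine learning functions,'' notes that strongly convex functions satisfy it with $\kappa$ equal to the condition number, and then states the corollary. So your bullets one and two match the paper's one-line justification via strong convexity, and your outline for bullet three (invariance of $Q\AL^*$ on the optimal set, polyhedral description of $S$, Hoffman's bound, then translating the residual into $\|T(\AL)-\AL\|$) is exactly the argument of the references the paper cites rather than anything the paper itself carries out.
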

Corollary~\ref{cor:functions} implies that many widely used machine learning problems, 
including dual formulation of SVM and $\ell_2$-regularized logistic regression,
admit a global error bound from the beginning even when the kernel has a zero eigenvalue.
%These have been proved in Theorem 1 of~\cite{HYD15}. 

%Now we discuss the inner solver for solving~\eqref{eq:subpb} at each iteration.
We do not require the inner solver to obtain the exact solution of~\eqref{eq:subpb}. 
Instead, we define the following condition for the inexact inner solver. 
\begin{definition}
An inexact solver for solving the subproblem~\eqref{eq:subpb} achieves
  a ``local linear improvement'' if the inexact solution $\bd_{S_r}$ satisfies
  \begin{equation}
   E[f_{\AL}^{(r)}(\bd_{S_r})] -  f_{\AL}^{(r)}(\hat{\bd}_{S_r}) \leq
    \eta \big( f_{\AL}^{(r)}(\bzero) - f_{\AL}^{(r)}(\hat{\bd}_{S_r})\big),
    \label{eq:linear_solver}
  \end{equation}
  for all iterates, where $f_{\AL}^{(r)}$ is the subproblem defined in~\eqref{eq:subpb}, $\hat{\bd}_{S_r}:=\arg\min_{\Delta} f_{\AL}^{(r)}(\Delta)$ 
  is the  optimal solution of the subproblem, and $\eta\in(0,1)$ is a constant. 
  Note that we can drop the expectation when the solver is deterministic. 
  \label{def:linear_reduction}
\end{definition}
In the following we list some widely-used subproblem solvers that satisfy Definition~\ref{def:linear_reduction}.
\begin{corollary}
    The following subproblem solvers satisfy 
    Definition~\ref{def:linear_reduction} if the objective function
    admits a global error bound from the beginning: 
    (1) Greedy Coordinate Descent with at least one step.  
    (2) Stochastic coordinate descent with at least one step.
    (3) Cyclic coordinate descent with at least one epoch. 
    \label{cor:solvers}
\end{corollary}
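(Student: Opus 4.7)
The plan is to reduce the corollary to three ingredients: (a) each subproblem inherits a global error bound, (b) for each of the three coordinate descent variants there is a one-step (or one-epoch) linear decrease of the form $f(\AL^+) - f^* \leq (1-\tau)(f(\AL)-f^*)$ for some $\tau \in (0,1)$, and (c) this bound is exactly what Definition~\ref{def:linear_reduction} requires with $\eta = 1-\tau$.

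First I would verify (a). The subproblem $f^{(r)}_{\AL}(\cdot)$ has the same structural form as the master problem~\eqref{eq:main_pb}: its quadratic matrix is the principal submatrix $Q_{S_r,S_r}$ (still positive semi-definite since $Q$ is), and each separable term $\bar{g}_i(\Delta\alpha_i) = g_i(\alpha_i+\Delta\alpha_i) + (Q\AL)_i\Delta\alpha_i$ is $g_i$ shifted by a linear term (and is therefore strongly convex whenever $g_i$ is, linear with a box constraint whenever $g_i$ is, etc.). Hence Corollary~\ref{cor:functions} applies verbatim to the subproblem, giving a constant $\kappa_r$ such that $\|\bd_{S_r} - P_{S^{(r)}}(\bd_{S_r})\| \le \kappa_r\|T^{(r)}(\bd_{S_r})-\bd_{S_r}\|$, where $T^{(r)}$ is the coordinate-minimization operator of~\eqref{eq:subpb}. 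In addition, since~\eqref{eq:subpb} has a Lipschitz-smooth quadratic part (with constant bounded by $\lambda_{\max}(Q_{S_r,S_r})$), a standard consequence of the global error bound is the functional error bound $f^{(r)}_{\AL}(\bd_{S_r}) - f^{(r)}_{\AL}(\hat{\bd}_{S_r}) \le C_r\|T^{(r)}(\bd_{S_r})-\bd_{S_r}\|^2$.

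Next I would handle (b). For greedy coordinate descent the Gauss-Southwell-type rule~\eqref{eq:selection} picks the index with largest projected-gradient magnitude, so the per-step decrease is at least $\frac{1}{2L_{\max}}|(T^{(r)}(\bd_{S_r})-\bd_{S_r})_{i^*}|^2 \ge \frac{1}{2L_{\max}|S_r|}\|T^{(r)}(\bd_{S_r})-\bd_{S_r}\|^2$, where $L_{\max}=\max_i Q_{ii}$. Combined with the functional error bound above this gives the required single-step contraction. For stochastic coordinate descent one uses the same per-coordinate decrease bound in expectation (an $\frac{1}{|S_r|}$ factor appears from uniform sampling) and again chains it with the error bound. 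For cyclic coordinate descent a single coordinate step is not enough, which is why the statement asks for one full epoch; here I would invoke the classical Luo--Tseng-style argument (as in~\cite{HYD15,PWW13a}), where bounding the progress of one complete sweep against $\|T^{(r)}(\bd_{S_r})-\bd_{S_r}\|^2$ plus the error bound yields the same contraction, only with a worse constant depending on $|S_r|$ and $L_{\max}/\lambda_{\min}^+(Q_{S_r,S_r}+\text{diag}(\text{strong convexity of }\bar{g}))$.

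Finally (c) is bookkeeping: each contraction $f^{(r)}_{\AL}(\bd^{+}_{S_r}) - f^{(r)}_{\AL}(\hat{\bd}_{S_r}) \le (1-\tau_r)(f^{(r)}_{\AL}(\bd_{S_r}) - f^{(r)}_{\AL}(\hat{\bd}_{S_r}))$, applied to the starting point $\bd_{S_r}=\bzero$, is exactly~\eqref{eq:linear_solver} with $\eta = 1-\tau_r$. The main obstacle will be the cyclic case: controlling an entire epoch requires relating the total drop to $\|T^{(r)}(\bd_{S_r})-\bd_{S_r}\|$, which in turn requires a Lipschitz-style bound on how far a single cyclic pass can move the iterate; this is where I would have to invoke (or re-derive) the Luo--Tseng technical lemma rather than argue from scratch. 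Conceptually the greedy case is clean and the randomized case only adds an expectation, but the cyclic case is the one place where a careful, solver-specific argument is unavoidable.
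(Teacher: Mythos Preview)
Your proposal is correct and mirrors the paper's own argument: the paper also reduces the corollary to showing that each coordinate-descent variant enjoys global linear convergence on the subproblem, proves this directly for greedy and randomized CD via the chain ``per-step decrease $\geq$ constant $\cdot\|T(\AL)-\AL\|^2$ $\geq$ (error bound) $\geq$ constant $\cdot(f-f^*)$'', and cites the Luo--Tseng analysis in~\cite{PWW13a} for the cyclic case. One small fix: in your greedy/stochastic step~(b) the per-step decrease from \emph{exact} coordinate minimization is bounded below by $\tfrac{R_{\min}}{2}|T_{i^*}-\alpha_{i^*}|^2$ (strong convexity of the one-variable restriction), not $\tfrac{1}{2L_{\max}}|T_{i^*}-\alpha_{i^*}|^2$; the Lipschitz constant gives an \emph{upper} bound on that gap, so swap $1/L_{\max}$ for $R_{\min}=\min_i Q_{ii}$ and the rest goes through unchanged.
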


The condition of~\eqref{eq:linear_solver} will be satisfied if an algorithm
has a global linear convergence rate. 
The global linear convergence rate of cyclic coordinate descent has been proved in Section 3 of~\cite{PWW13a}. 
We show global linear convergence for randomized coordinate descent and greedy coordinate descent
in the Appendix.  

We now show our proposed method \DBCD in Algorithm \ref{alg:dbcd} has a global linear convergence rate in the following theorem. 
\begin{theorem}
  Assume (1) the objective function admits a global error bound from the 
  beginning (Definition~\ref{def:global_error}), (2) the inner solver achieves 
  a local linear improvement (Definition~\ref{def:linear_reduction}), (3) 
  $R_{min}=\min_i Q_{ii}\neq 0$,  and (4) the objective function is $L$-Lipschitz
  continuous for the level set. 
  Then the following global linear convergence rate holds:
  \begin{align*}
    E[f(\AL_{t+1})] - f(\AL^*) 
    \leq \big(1-(1-\eta)R_{min}/(kBL\kappa^2 )\big) \big( E[f(\AL_t)]-f(\AL^*)\big), 
  \end{align*}
  where $\AL^*$ is an optimal solution, and $B = \max_{r=1}^k |S_r|$ is the maximum block size. 
  We can drop the expectation when the solver is deterministic. 
  \label{thm:linear_convergence}
\end{theorem}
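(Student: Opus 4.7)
The plan is to chain together four ingredients, moving from the line-search step down to a direct comparison with $f(\AL^*)$.

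First I would use convexity of $f$ to compare the full step to a virtual step of length $1/k$. Writing $\AL+\bd/k=\frac{1}{k}\sum_{r=1}^k(\AL+\bd^{(r)})$, where $\bd^{(r)}$ is the vector that equals $\bd_{S_r}$ on $S_r$ and $\bzero$ elsewhere, the block-diagonality of $\bar Q$ makes the per-block change of $f$ coincide with the subproblem value: a direct expansion shows $f(\AL)-f(\AL+\bd^{(r)})=f^{(r)}_\AL(\bzero)-f^{(r)}_\AL(\bd_{S_r})$. Convexity of $f$ then yields
\begin{equation*}
f(\AL)-f(\AL+\bd/k)\ \ge\ \tfrac{1}{k}\sum_{r=1}^k\bigl(f^{(r)}_\AL(\bzero)-f^{(r)}_\AL(\bd_{S_r})\bigr)\ =\ \tfrac{1}{k}\bigl(\bar f_\AL(\bzero)-\bar f_\AL(\bd)\bigr),
\end{equation*}
and the auxiliary line-search requirement $f(\AL+\beta\bd)\le f(\AL+\bd/k)$ transfers the bound to the actual iterate.

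Second, I would apply Definition~\ref{def:linear_reduction} block-wise and sum, getting $E[\bar f_\AL(\bzero)-\bar f_\AL(\bd)]\ge(1-\eta)\bigl(\bar f_\AL(\bzero)-\bar f_\AL(\hat\bd)\bigr)$, where $\hat\bd$ concatenates the exact block minimizers. Third, I would lower-bound $\bar f_\AL(\bzero)-\bar f_\AL(\hat\bd)$ by the best single-coordinate decrease inside each block: for any $i\in S_r$, the one-coordinate exact minimizer of $f^{(r)}_\AL$ along index $i$ is exactly $T_i(\AL)-\alpha_i$, and using that optimality condition together with convexity of $g_i$ (with the right subgradient selected) gives a decrease of at least $\tfrac12 Q_{ii}(T_i(\AL)-\alpha_i)^2\ge\tfrac{R_{\min}}{2}(T_i(\AL)-\alpha_i)^2$. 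Taking the maximum over $i\in S_r$, using $\max\ge\tfrac{1}{|S_r|}\sum$, and summing over $r$ produces
\begin{equation*}
\bar f_\AL(\bzero)-\bar f_\AL(\hat\bd)\ \ge\ \tfrac{R_{\min}}{2B}\|T(\AL)-\AL\|^2.
\end{equation*}

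Chaining the three steps yields the per-iteration progress $f(\AL_t)-E[f(\AL_{t+1})]\ge \tfrac{(1-\eta)R_{\min}}{2kB}\|T(\AL_t)-\AL_t\|^2$. The final ingredient uses the global error bound $\|\AL-P_S(\AL)\|\le\kappa\|T(\AL)-\AL\|$ together with the $L$-Lipschitz hypothesis on $f$ to obtain an upper bound of the form $f(\AL_t)-f(\AL^*)\le L\kappa^2\|T(\AL_t)-\AL_t\|^2$; dividing the two estimates produces the stated contraction factor $1-(1-\eta)R_{\min}/(kBL\kappa^2)$.

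The hard part will be this last step. Naively, function-Lipschitzness only gives $f(\AL)-f(\AL^*)\le L\|\AL-P_S(\AL)\|\le L\kappa\|T(\AL)-\AL\|$, which is linear rather than quadratic in $\|T(\AL)-\AL\|$ and would collapse the argument to a sub-linear rate. Reaching the stated constant therefore requires either interpreting ``$L$-Lipschitz'' as Lipschitz-gradient so that a descent-lemma-style quadratic bound on sub-optimality applies on the level set, or extracting a proximal-PL-type inequality directly from the error bound combined with local smoothness. A secondary but delicate point is the single-coordinate argument in step three: one must pick a subgradient of the non-smooth $g_i$ (e.g.\ a box indicator in the SVM case) for which the half-of-curvature quadratic decrease survives, and verify that the choice is consistent with the optimality condition defining $T_i(\AL)$.
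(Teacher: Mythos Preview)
Your proposal is correct and follows essentially the same route as the paper: line-search bound via convexity of $f$ at the point $\AL+\tfrac{1}{k}\bd$, then Definition~\ref{def:linear_reduction} block-wise, then a reduction to single-coordinate decreases to reach $\|T(\AL)-\AL\|^2$, then the global error bound plus a smoothness inequality. The only cosmetic difference is in your third step: where the paper compares $\hat\bd_{<S_r>}$ to the scaled vector $\tfrac{1}{|S_r|}\bar\bd_{<S_r>}$ and applies convexity once more to split into single-coordinate terms, you instead compare to the best single coordinate and use $\max\ge\tfrac{1}{|S_r|}\sum$; both arguments yield the same $\tfrac{R_{\min}}{2B}\|T(\AL)-\AL\|^2$ bound. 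Your diagnosis of the final step is also accurate: the paper silently uses a quadratic upper bound $f(\AL_t)-f(\AL^*)\lesssim L\,\|\AL_t-P_S(\AL_t)\|^2$, which indeed requires reading ``$L$-Lipschitz continuous'' as Lipschitz-continuous \emph{gradient} (a descent-lemma / smoothness inequality) rather than Lipschitz-continuous function value, so your caveat there is well-placed rather than a gap in your plan.
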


\begin{proof}
We first define some notations that we will use in the proof. 
Let $\AL_t$ be the current solution of iteration $t$, 
$\bd_t$ is the approximate solution of~\eqref{eq:subpb} satisfying 
Definition~\ref{def:linear_reduction}. For convenience we will omit
the subscript $t$ here (so $\bd:=\bd_t$).
We use $\bd_{S_r}$  to denote the size $|S_r|$ subvector,  
and $\bd_{<S_r>}$  to denote the $n$ dimensional vector with
\begin{equation*}
  \bd_{<S_r>} = \begin{cases}
    d_i &\text{ if } i \in S_r \\
    0 & \text{ otherwise}
  \end{cases}
\end{equation*}

By the definition of our line search procedure described in Section~\ref{sec:line_search}, 
we have
\begin{align*}
  f(\AL_t + \beta\bd) &\leq f(\AL_t + \frac{1}{k} \sum_{r=1}^k \bd_{<S_r>}) \\
  &= f(\frac{1}{k}\sum_{r=1}^k (\AL_t + \bd_{<S_r>})) \\
  &\leq \frac{1}{k} \sum_{r=1}^k f(\AL_t + \bd_{<S_r>}), 
\end{align*}
where the last inequality is from the definition is from the convexity of 
$f(\cdot)$. 
We define $\hat{\bd}$ to be the optimal solution of~\eqref{eq:q_approx} (so each
$\hat{\bd}_{S_r}$ is the optimal solution of the $r$-th 
subproblem~\eqref{eq:subpb}). 
Then we have
\begin{align}
  &f(\AL_t) - f(\AL_t + \bd_t) \\
  \geq&  f(\AL_t) - \frac{1}{k} \sum_{r=1}^k 
  f(\AL_t + \bd_{<S_r>}) \nonumber \\
  =& \frac{1}{k} \sum_{r=1}^k (f(\AL_t) - f(\AL_t+\bd_{<S_r>})) \nonumber\\
  =& \frac{1}{k} \sum_{r=1}^k \bigg(f(\AL_t) - f(\AL_t + \hat{\bd}_{<S_r>}) \nonumber\\ 
  &+ f(\AL_t + \hat{\bd}_{<S_r>}) - f(\AL_t + \bd_{<S_r>})\bigg) \nonumber\\
  \geq& \frac{1}{k} \sum_{r=1}^k (1-\eta) \bigg( f(\AL_t) - f(\AL_t + 
  \hat{\bd}_{<S_r>}) \bigg), 
  \label{eq:g_d1}
\end{align}
where the last inequality is from the local linear improvement of the 
in-exact subproblem solver (Definition \ref{def:linear_reduction}). 
We then define a vector $\bar{\bd}$ where each element is the optimal
solution of the one variable subproblem: 
\begin{equation*}
  \bar{d}_i = T_i(\AL_t)-(\AL_t)_i \ \  
  \forall i=1, \dots, n, 
\end{equation*}
where $T_i(\AL_t)$ was defined in~\eqref{eq:T_def}. 

Since $\hat{\bd}_{S_r}$ is the optimal solution of each subproblem, 
we have 
\begin{equation*}
  f(\AL_t + \hat{\bd}_{<S_r>}) \leq f(\AL_t + \bar{\bd}_{<S_r>}), \ \forall 
  r=1, \dots, k. 
\end{equation*}
Combining with~\eqref{eq:g_d1}  we get
\begin{align*}
  &f(\AL_t) - f(\AL_t + \bd_t) \\
  \geq &
  \frac{1-\eta}{k} \sum_{r=1}^k \big( f(\AL_t) - f(\AL_t + \bar{\bd}_{<S_r>})  
  \big) \\
   \geq& \frac{1-\eta}{k} \sum_{r=1}^k \bigg( f(\AL_t) - f\big( \frac{1}{|S_r|}
  \sum_{i\in S_r}  (\AL_t + \bar{\bd}_{<i>}) \big) \bigg) \\
   \geq& \frac{1-\eta}{k} \sum_{r=1}^k \bigg( 
  f(\AL_t) - \frac{1}{|S_r|} \sum_{i\in S_r} f(\AL_t + \bar{\bd}_{<i>})
  \bigg) \\
  & \quad\quad\quad\quad\quad\text{ (by the convexity of $f$)}\\
   =& \frac{1-\eta}{k} \sum_{r=1}^k\frac{1}{|S_r|} \sum_{i\in S_r}
  \big( f(\AL_t) - f(\AL_t + \bar{\bd}_{<i>})\big) \\
  \geq& \frac{1-\eta}{kB}  \sum_{i=1}^n \big( f(\AL_t) - 
  f(\AL_t+\bar{\bd}_{<i>}) \big), 
\end{align*}
where the last inequality is by the definition of $B$. 

Now consider the one variable problem $f(\AL_t + u \be_i)$. 
By Taylor expansion and the definition of $R_{min}$ we have 
\begin{align*}
  f(\AL_t) &\geq f(\AL_t + \bar{\bd}_{<i>}) + \partial_i 
  f(\AL_t + \bar{\bd}_{<i>}) \times \bar{d}_i + \frac{1}{2} R_{min}\bar{d}_i^2 \\
  &\geq f(\AL_t + \bar{\bd}_{<i>}) + \frac{1}{2} R_{min} \bar{d}_i^2, 
\end{align*}
where the second inequality comes from the fact that $\bar{d}_i$ is the optimal
solution of the one-variable subproblem. 
As a result, 
\begin{equation*}
  \sum_{i=1}^n \bigg( f(\AL_t) - f(\AL_t + \bar{\bd}_{<i>}) \bigg) 
  \geq \sum_{i=1}^n R_{min} \bar{d}_i^2 = R_{min} \|\bar{\bd}\|^2. 
\end{equation*}
Therefore, 
\begin{align*}
  f(\AL_t) - f(\AL_t + \bd_t) &\geq \frac{R_{min} (1-\eta)}{kB} \|\bar{\bd}\|^2 \\
  &= \frac{R_{min}(1-\eta)}{kB} \|T(\AL_t)-\AL_t\|^2 \\
  &\geq \frac{ R_{min}(1-\eta)}{kB\kappa^2} \|P_S(\AL_t) - \AL_t\|^2  \text{ (by~\eqref{eq:global_error})}\\
  &\geq \frac{ L R_{min}(1-\eta)}{kB\kappa^2} \|f(\AL_t) - f(\AL^*)\|. 
 \end{align*}
Therefore, we have
\begin{align*}
  f(\AL_{t+1}) - f(\AL^*) &= f(\AL_t) - \big( f(\AL_t) - f(\AL_{t+1}) \big) - 
  f(\AL^*) \\
  &\leq \bigg(1- \frac{R_{min}(1-\eta)}{kBL\kappa^2}\bigg) \big( 
  f(\AL_t)-f(\AL^*) \big). 
\end{align*}
\end{proof}
%The proof sketch: (1) we first bound the reduction of the original objective function 
%by the reduction of the ``approximate'' objective function;
%(2) Further bound this quantity by the magnitude of projected gradient;
%(3) Further connect the projected gradient to the distance between the current solution and the optimal solution, and this 
%leads to a linear convergence rate (see Appendix for the detailed proof). 
Note that we do not make any assumption on the partition used in \DBCD, 
so our analysis works for a wide class of optimization solvers, 
including distributed linear SVM solver in~\cite{CPL15a} where they only consider that case when the subproblems are solved exactly. 

\begin{figure*}[tb]
  \centering
\begin{tabular}{ccc}
  \subfloat[\webspam, comparison]{\includegraphics[width=0.3\textwidth]{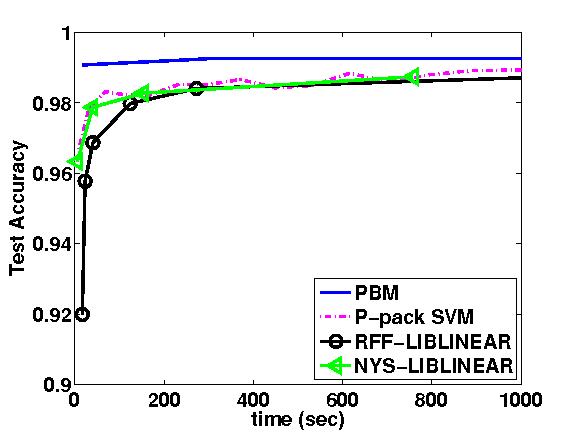}}
  &
  \subfloat[\covtype, comparison]{\includegraphics[width=0.3\textwidth]{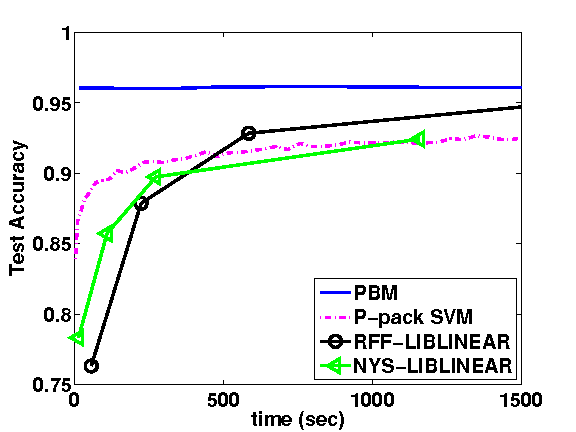}}
  &
  \subfloat[\cifar, comparison]{\includegraphics[width=0.3\textwidth]{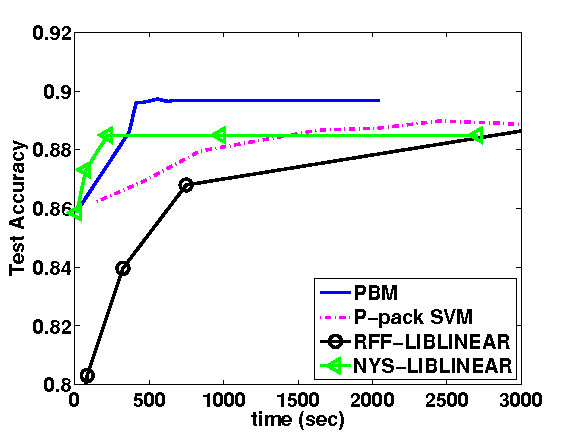}}
  \\ 
  \subfloat[\MNIST, comparison]{\includegraphics[width=0.3\textwidth]{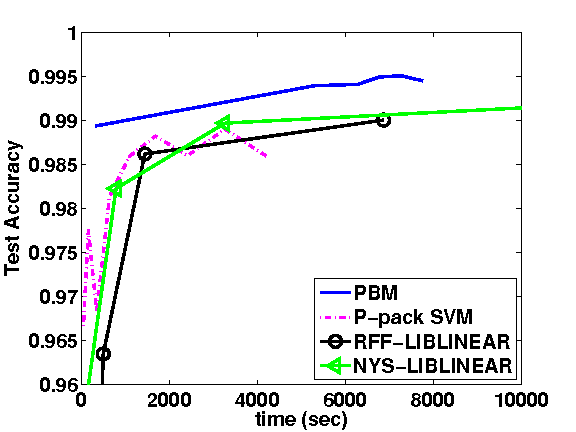}}
  & 
  \subfloat[\webspam, scaling]{\includegraphics[width=0.3\textwidth]{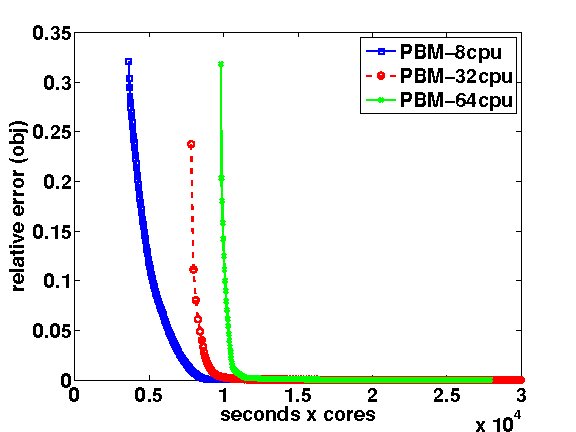}}
  &
  \subfloat[\covtype, scaling]{\includegraphics[width=0.3\textwidth]{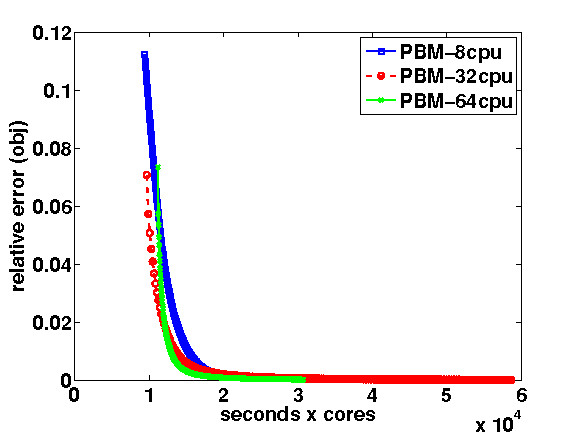}}
\end{tabular}
\caption{\small (a)-(d): Comparison with other distributed kernel SVM solvers using 32 workers. 
Markers for 
RFF-LIBLINEAR and NYS-LIBLINEAR are obtained by 
varying the number random features and landmark points respectively. 
%We observe that our proposed method is much faster than other algorithms. 
(e)-(f): The objective function of \DBCD as a function of computation time 
(time in seconds $\times$ the number of workers), when the number of workers 
is varied. Results show that \DBCD has good scalability. 
\label{fig:comparison}  }
\end{figure*}

\begin{table}
  \centering
  \caption{Dataset statistics }
  \label{tab:datasets}
\resizebox{12cm}{!}{
  \begin{tabular}{|c|r|r|r|r|r|} \hline
    Dataset & \# training samples & \# testing samples & \# features & $C$ & $\gamma$\\ 
    \hline
%    \ijcnn &49,990&91,701&22 \\
\cifar &50,000 & 10,000 & 3072 & $2^3$ & $2^{-22}$\\
%    \census & 159,619 & 39,904& 409 \\
    \covtype & 464,810 & 116,202 & 54 & $2^5$ & $2^5$ \\
    \webspam & 280,000 & 70,000 & 254 & $2^3$ & $2^5$\\
%\kddcup & 4,898,431 & 311,029 & 125 \\
\MNIST & 8,000,000 & 100,000 & 784 & $2^0$ & $2^{-21}$ \\
\KDDA & 8,407,752 & 510,302 & 20,216,830 & $2^0$ & $2^{-15}$ \\
    \hline
  \end{tabular}
  }
\end{table}

\begin{table*}
  \centering
  \caption{\small Comparison on real datasets. We use 32 machines for all the distributed solvers (PBM, P-packSGD, PSVM), 
  and 1 machine for the serial solver (DC-SVM). The first column of \DBCD  shows that \DBCD achieves
  good test accuracy after 1 iteration, 
  and the second column of \DBCD shows \DBCD can achieve an 
  accurate solution (with $\frac{f(\AL)-f(\AL^*)}{|f(\AL^*)|}<10^{-3}$) quickly and obtain even better
  accuracy. Note that ``-'' indicates the training time is more than 10 hours and ``x'' indicates the algorithm
  cannot solve logistic regression. The timing for kernel logistic regression (LR) is much slower because $\AL$ is always dense
  when using the logistic loss. Although DC-SVM and PSVM can be potentially used for logistic regression, there 
is no existing implementation for these methods. Writing good code for logistic regression is nontrivial---we have to do kernel caching and shrinking for large datasets. 
Therefore we do not compare with them here.
  \label{tab:comparison}}
  \resizebox{16cm}{!}{
  \begin{tabular}{|c|rr|rr|rr|rr|rr|rr|rr|}
    \hline
    & \multicolumn{2}{|c}{\DBCD (first step)}  & \multicolumn{2}{|c}{\DBCD ($10^{-3}$ error)} & \multicolumn{2}{|c}{P-packSGD}  & \multicolumn{2}{|c}{PSVM $p=n^{0.5}$} & \multicolumn{2}{|c|}{PSVM $p=n^{0.6}$} &  \multicolumn{2}{|c|}{DC-SVM ($10^{-3}$ error)}\\
    \cline{2-13}
  & time(s) & acc(\%)  &  time(s) & acc(\%) & time(s) & acc(\%) & time(s) & acc(\%) & time(s) & acc(\%) & time(s) & acc(\%) \\
    \hline
    \webspam (SVM) & {\bf 16} & 99.07 & 360 & {\bf 99.26} & 1478 & 98.99 &  773 & 75.79 & 2304 & 88.68 & 8742 & 99.26   \\
%    \webspam, $\gamma=8$ & 184 & 98.70 & 696 & 56.22 & 2328 & 63.06 & & \\
\covtype (SVM)  & {\bf 14} & 96.05 & 772 & {\bf 96.13} & 1349 & 92.67 & 286 & 76.00 & 7071 & 81.53 & 10457 & 96.13 \\
%    \covtype, $\gamma=8$ &  &  & 279 & 66.86 & 7095 & 50.05 &  & \\
\cifar (SVM)  & {\bf 15} & 85.91 & 540 & {\bf 89.72} & 1233 & 88.31 & 41 & 79.89 & 1474 & 69.73 & 13006  & 89.72 \\
\MNIST (SVM) & {\bf 321} & 98.94 & 8112 & {\bf 99.45} & 2414 & 98.60 & - & - & - & - & - & - \\
\KDDA (SVM) & {\bf 1832} & 85.51 & 12700 & {\bf 86.31} & - & - & - & - & - & - & - & - \\
%    \ijcnn, $\gamma=0.5$ & & & & & & & & \\
    \hline
    \webspam (LR) & {\bf 1679} & 92.01 & 2131 & {\bf 99.07}  & 4417 & 98.96 & x & x & x & x  & x & x\\
    \cifar (LR) & {\bf 471} & 83.37 & 758 & {\bf 88.14} & 2115 & 87.07 & x & x & x & x & x & x \\
%    \ijcnn (LR) & 16 & 97.7 & 54.7 & 97.7 & - & - & - & - \\
    \hline
  \end{tabular}
  }
\end{table*}

\section{Experimental Results}
\label{sec:exp}

%We investigate the performance and scaling of our proposed algorithms for kernel 
%SVM and kernel logistic regression. 
We conduct experiments on five large-scale datasets listed in Table~\ref{tab:datasets}. %\footnote{\cifar
%can be downloaded from \url{http://www.cs.toronto.edu/~kriz/cifar.html}; other datasets
%can be downloaded from LibSVM Datasets}.
We follow the procedure in~\cite{ZK12a,HSD14} to transform \cifar and \MNIST into binary classification problems, 
and Gaussian kernel $K(\bx_i, \bx_j) = e^{-\gamma\|\bx_i-\bx_j\|^2}$ is used in all the comparisons.  
We follow the parameter settings in~\cite{HSD14}, where 
$C$ and $\gamma$ are selected by 5-fold cross validation on a grid  of parameters. 
The experiments are conducted on 
%on a parallel platform at the
Texas Advanced Computing Center Maverick cluster. 
%where each machine has %an Intel E5-2680 CPU and 
%256GB memory. 
%Each job
%can run on at most 32 nodes with Intel Xeon E5-2680 CPUs and 256GB memory. 
%For experimental results with 64 workers (Figure~\ref{fig:comparison} (d)-(e)) we
%run two workers on each node. 

%\paragraph{Comparison with other Kernel SVM solvers. }
We compare our \DBCD method with the following distributed kernel SVM training algorithms: 
\begin{enumerate}[noitemsep,nolistsep,leftmargin=*]
\item P-pack SVM~\cite{ZAZ09a}: a parallel Stochastic Gradient Descent (SGD) algorithm 
  for kernel SVM training. We set the pack size $r=100$ according to the original paper. 
\item Random Fourier features with distributed \liblinear: 
  %random Fourier features~\cite{AR08b} has become popular for solving kernel 
 % SVM. 
In a distributed system, we can compute random features~\cite{AR08b} for each sample
  in parallel (this is a one-time preprocessing step), and then solve the resulting linear SVM problem by distributed
  dual coordinate descent~\cite{CPL15a} implemented in 
  MPI LIBLINEAR. Note that although Fastfood~\cite{QL13a} can generate random features in a faster way, 
  the bottleneck for RFF-LIBLINEAR is solving the resulting linear SVM problem after generating random features, so
  the performance is similar. 

  %\footnote{http://www.csie.ntu.edu.tw/~cjlin/libsvmtools/distributed-liblinear/}.
\item Nystr\"{o}m approximation with distributed \liblinear: 
  %Nystr\"{o}m 
  %approximation~\cite{CW01a,PD05a} is another popular way to reduce kernel problems
  %to linear problems. 
  We implemented the ensemble Nystr\"{o}m approximation~\cite{SK09a} with kmeans sampling in 
  a distributed system and solved the resulting linear SVM problem by MPI 
  LIBLINEAR. The approach is similar to~\cite{DM14a}.
  %, but they use a MapReduce
  %system and we use an MPI implementation. 
\item PSVM~\cite{EC07a}: a parallel kernel SVM solver by in-complete Cholesky factorization
  and a parallel interior point method. We test the performance of PSVM with 
  the rank suggested by the original paper ($n^{0.5}$ or $n^{0.6}$ where $n$ 
  is number of samples).  
\end{enumerate}
{\bf Comparison with other solvers. } We use 32 machines (each with 1 thread) and the best $C, \gamma$ for all the 
solvers. Our parameter settings are exactly the same with ~\cite{HSD14} chosen by cross-validation in $[2^{-30}, 2^{10}]$. For \cifar, \MNIST and 
\KDDA the samples are not normalized, so the averaged norm mean($\|\bx_i\|$) is large (it is 876 on cifar). Since Gaussian kernel is $e^{-\gamma\|\bx_i-\bx_j\|^2}$, a good $\gamma$ will be very small.  We mainly compare the {\it prediction accuracy } in the paper because most of the parallel kernel SVM solvers are 
``approximate solvers''---they solve an approximated problem, so it is not fair to evaluate them using the original objective function. 
The results in Figure~\ref{fig:comparison} (a)-(d) indicate that
our proposed algorithm is much faster than other approaches. We further test
the algorithms with varied number of workers and parameters in 
Table~\ref{tab:comparison}. 
Note that PSVM usually gets lower test accuracy %since it approximates the kernel function
%by incomplete Cholesky factorization, 
so we only show its results
in Table~\ref{tab:comparison}.  

{\bf Scalability of \DBCD. }For the second experiment we varied the number of workers from 8 to 64, 
and plot the scaling behavior of \DBCD. 
In Figure~\ref{fig:comparison} (e)-(f), we set $y$-axis to be the relative error
defined by $(f(\AL_t)-f(\AL^*))/f(\AL^*)$ where $\AL^*$ is the optimal 
solution, 
and $x$-axis to be the total CPU time expended which is given by the number of 
seconds elapsed multiplied by the number of workers. We plot the convergence
curves by changing number of cores. The perfect linear speedup is achieved
until the curves overlap. This is indeed the case for covtype. 
%and the difference
%is also small for \webspam. 

{\bf Comparison with single-machine solvers. }
We also compare \DBCD with state-of-the-art sequential kernel SVM algorithm 
DC-SVM~\cite{HSD14}. The results are in Table~\ref{tab:comparison} and Figure~\ref{fig:compare_dcsvm}. DC-SVM first computes the solutions in each partition, and then use the concatenation of local (dual) solutions to initialize a global kernel SVM solver (e.g., LIBSVM). However, the top level of DC-SVM is the bottleneck (taking 2/3 of the run time), and LIBSVM cannot run on multiple machines. So DC-SVM cannot be parallelized, and we compare our method with it just to show our distributed algorithm is much faster than the best serial algorithm (which is not the case for many other distributed solvers). 

\begin{figure*}[tb]
  \centering
\begin{tabular}{cccc}
  \hspace{-20pt}  \subfloat[\webspam, obj]{\includegraphics[width=0.25\textwidth]{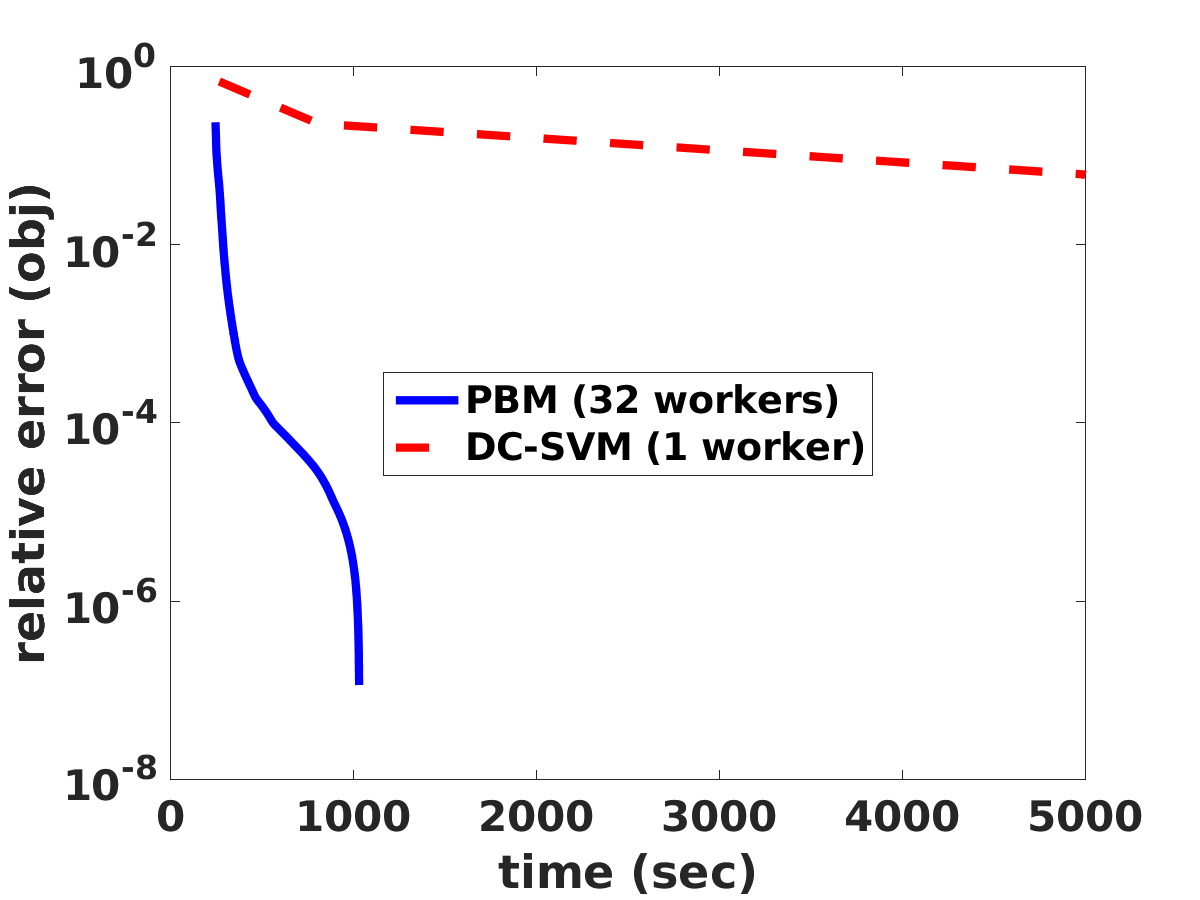}}
  &\hspace{-5pt}
  \subfloat[\webspam, accuracy]{\includegraphics[width=0.25\textwidth]{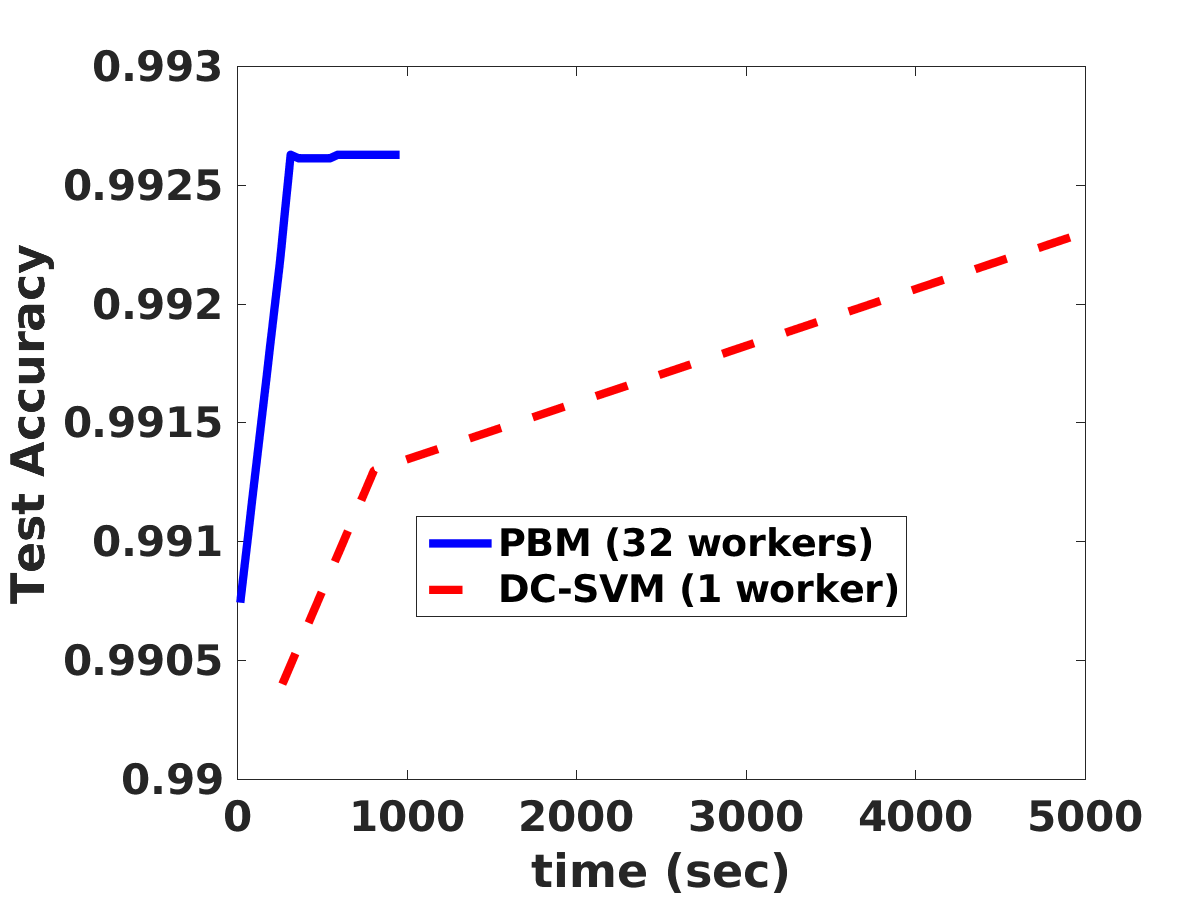}}
  & \hspace{-5pt}
  \subfloat[\covtype, obj]{\includegraphics[width=0.25\textwidth]{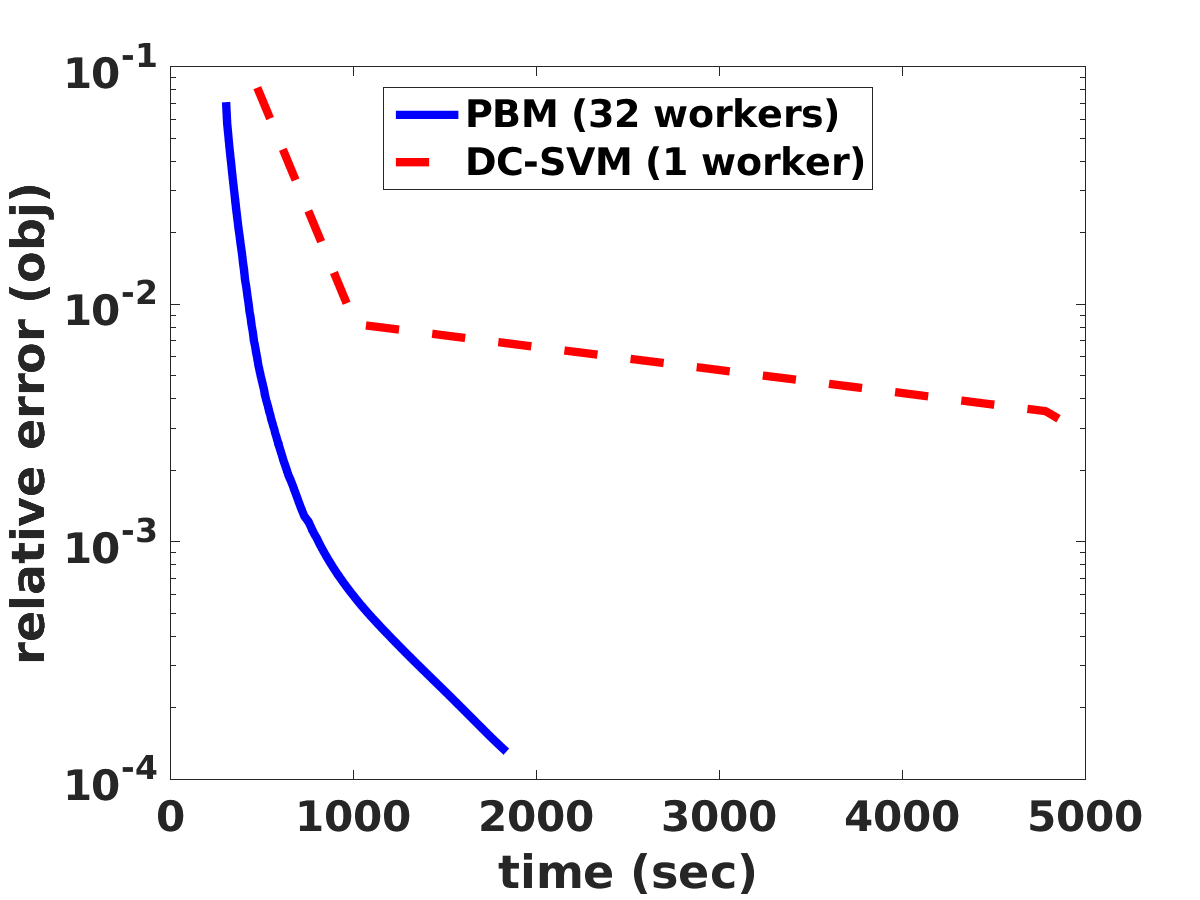}}
  & \hspace{-5pt}
  \subfloat[\covtype, accuracy]{\includegraphics[width=0.25\textwidth]{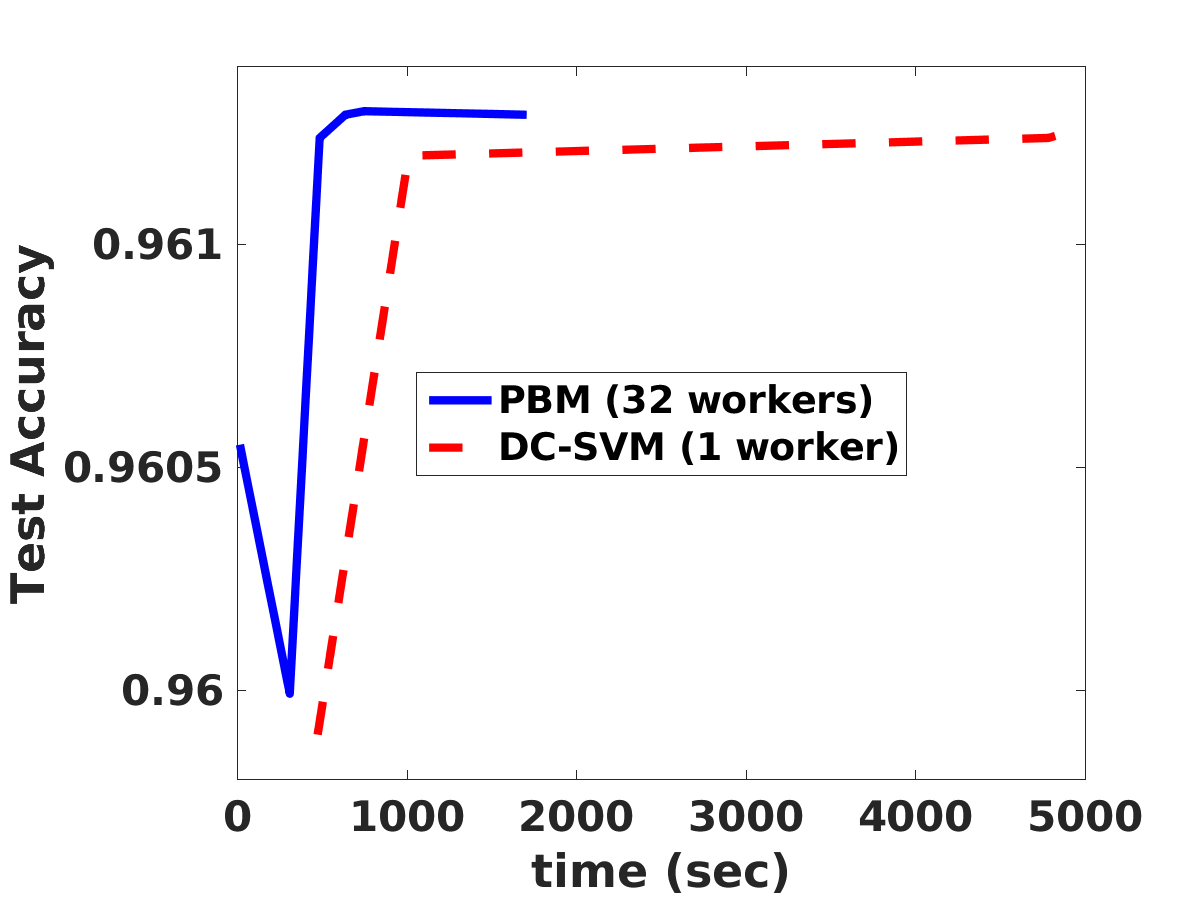}}
\end{tabular}
\caption{ Comparison with DC-SVM (a sequential kernel SVM 
solver). \label{fig:compare_dcsvm}  }
\end{figure*}
 
%show that
%\DBCD is much faster by multiple machines.

{\bf Kernel logistic regression. }
We implement the \DBCD algorithm to solve the kernel logistic regression problem. 
Note that PSVM cannot be directly applied to kernel logistic regression. 
We use greedy coordinate descent proposed in~\cite{SSK03a} to solve each subproblem~\eqref{eq:subpb}. 
The results are presented in Table~\ref{tab:comparison}, showing that our algorithm is faster than others.
%distributed SGD algorithm.
\section{Conclusion}
\label{sec:conclude}
We have proposed a parallel block minimization (\DBCD) framework for 
solving kernel machines on distributed systems. 
We show that \DBCD significantly outperforms other approaches on large-scale datasets, 
and prove a global linear convergence of \DBCD under mild conditions.
\bibliographystyle{unsrtnat}
\bibliography{sdp}  % sigproc.bib is the name of the Bibliography in this case

\begin{thebibliography}{28}
\providecommand{\natexlab}[1]{#1}
\providecommand{\url}[1]{\texttt{#1}}
\expandafter\ifx\csname urlstyle\endcsname\relax
  \providecommand{\doi}[1]{doi: #1}\else
  \providecommand{\doi}{doi: \begingroup \urlstyle{rm}\Url}\fi

\bibitem[Platt(1998)]{JP98a}
John~C. Platt.
\newblock Fast training of support vector machines using sequential minimal
  optimization.
\newblock In \emph{Advances in Kernel Methods - Support Vector Learning}, 1998.

\bibitem[Joachims(1998)]{TJ98a}
Thorsten Joachims.
\newblock Making large-scale {SVM} learning practical.
\newblock In \emph{Advances in Kernel Methods -- Support Vector Learning},
  1998.

\bibitem[Chang et~al.(2008)Chang, Zhu, Wang, Bai, Li, Qiu, and Cui]{EC07a}
E.~Chang, K.~Zhu, H.~Wang, H.~Bai, J.~Li, Z.~Qiu, and H.~Cui.
\newblock Parallelizing support vector machines on distributed computers.
\newblock In \emph{NIPS}. 2008.

\bibitem[Zhu et~al.(2009)Zhu, Chen, Wang, Zhu, and Chen]{ZAZ09a}
Zeyuan~A. Zhu, Weizhu Chen, Gang Wang, Chenguang Zhu, and Zheng Chen.
\newblock P-pack{SVM}: Parallel primal gradient descent kernel {SVM}.
\newblock In \emph{ICDM}, 2009.

\bibitem[Jaggi et~al.(2014)Jaggi, Smith, Tak{\'a}{\v{c}}, Terhorst, Hofmann,
  and Jordan]{MJ14a}
Martin Jaggi, Virginia Smith, Martin Tak{\'a}{\v{c}}, Jonathan Terhorst, Thomas
  Hofmann, and Michael~I Jordan.
\newblock Communication-efficient distributed dual coordinate ascent.
\newblock In \emph{NIPS}. 2014.

\bibitem[Zhang and Xiao(2015)]{YZ15a}
Y.~Zhang and L.~Xiao.
\newblock {D}i{SCO}: Communication-efficient distributed optimization of
  self-concordant loss.
\newblock In \emph{ICML}, 2015.

\bibitem[Lee and Roth(2015)]{CPL15a}
C.-P. Lee and D.~Roth.
\newblock Distributed box-constrained quadratic optimization for dual linear
  {SVM}.
\newblock In \emph{ICML}, 2015.

\bibitem[Keerthi et~al.(2005)Keerthi, Duan, Shevade, and Poo]{SSK03a}
S.~Sathiya Keerthi, Kaibo Duan, Shirish Shevade, and Aun~Neow Poo.
\newblock A fast dual algorithm for kernel logistic regression.
\newblock \emph{Machine Learning}, 61:\penalty0 151--165, 2005.

\bibitem[Graf et~al.(2005)Graf, Cosatto, Bottou, Dundanovic, and
  Vapnik]{cascadeSVM}
H.~P. Graf, E.~Cosatto, L.~Bottou, I.~Dundanovic, and V.~Vapnik.
\newblock Parallel support vector machines: The cascade {S}{V}{M}.
\newblock In \emph{NIPS}, 2005.

\bibitem[Hsieh et~al.(2014)Hsieh, Si, and Dhillon]{HSD14}
C.~J. Hsieh, S.~Si, and I.~S. Dhillon.
\newblock A divide-and-conquer solver for kernel support vector machines.
\newblock In \emph{ICML}, 2014.

\bibitem[You et~al.(2015)You, Demmel, Czechowski, Song, and Vuduc]{YY15a}
Y.~You, J.~Demmel, K.~Czechowski, L.~Song, and R.~Vuduc.
\newblock {CA}-{SVM}: Communication-avoiding support vector machines on
  clusters.
\newblock In \emph{IPDPS}, 2015.

\bibitem[Williams and Seeger(2001)]{CW01a}
C.~K.~I. Williams and M.~Seeger.
\newblock Using the {Nystr\"{o}m} method to speed up kernel machines.
\newblock In \emph{NIPS}, 2001.

\bibitem[Rahimi and Recht(2008)]{AR08b}
Ali Rahimi and Benjamin Recht.
\newblock Random features for large-scale kernel machines.
\newblock In \emph{NIPS}, 2008.

\bibitem[Huang et~al.(2014)Huang, Avron, Sainath, Sindhwani, and
  Ramabhadran]{Huang_Kernel_DNN_ICASSP2014}
P.-S. Huang, H.~Avron, T.~Sainath, V.~Sindhwani, and B.~Ramabhadran.
\newblock Kernel methods match deep neural networks on timit.
\newblock In \emph{ICASSP}, pages 205--209, 2014.

\bibitem[Tu et~al.(2016)Tu, Roelofs, Venkataraman, and Recht]{TuRVR16}
Stephen Tu, Rebecca Roelofs, Shivaram Venkataraman, and Benjamin Recht.
\newblock Large scale kernel learning using block coordinate descent.
\newblock \emph{CoRR}, abs/1602.05310, 2016.

\bibitem[Yang(2013)]{TY13a}
T.~Yang.
\newblock Trading computation for communication: Distributed stochastic dual
  coordinate ascent.
\newblock In \emph{NIPS}, 2013.

\bibitem[Richt\'{a}rik and Tak\'{a}\v{c}(2012)]{PR12a}
P.~Richt\'{a}rik and M.~Tak\'{a}\v{c}.
\newblock Parallel coordinate descent methods for big data optimization.
\newblock \emph{Mathematical Programming}, 2012.

\bibitem[Scherrer et~al.(2012)Scherrer, Tewari, Halappanavar, and
  Haglin]{CS12a}
C.~Scherrer, A.~Tewari, M.~Halappanavar, and D.~Haglin.
\newblock Feature clustering for accelerating parallel coordinate descent.
\newblock In \emph{NIPS}, 2012.

\bibitem[Mahajan et~al.(2014{\natexlab{a}})Mahajan, Keerthi, and
  Sellamanickam]{DM15a}
Dhruv Mahajan, S.~Sathiya Keerthi, and Sundararajan Sellamanickam.
\newblock A distributed block coordinate descent method for training l1
  regularized linear classifiers.
\newblock \emph{ar{X}iv:1405.4544}, 2014{\natexlab{a}}.

\bibitem[Yu et~al.(2011)Yu, Huang, and Lin]{HFY10a}
H.-F. Yu, F.-L. Huang, and C.-J. Lin.
\newblock Dual coordinate descent methods for logistic regression and maximum
  entropy models.
\newblock \emph{Machine Learning}, 85\penalty0 (1-2):\penalty0 41--75, 2011.

\bibitem[Chan et~al.(2007)Chan, Heimlich, Purkayastha, and van~de Geijn]{EC07b}
E.~Chan, M.~Heimlich, A.~Purkayastha, and R.~van~de Geijn.
\newblock Collective communication: Theory, practice, and experience.
\newblock \emph{Concurrency and Computation: Practice and Experience},
  19:\penalty0 1749--1783, 2007.

\bibitem[Si et~al.(2014)Si, Hsieh, and Dhillon]{SHD14}
S.~Si, C.~J. Hsieh, and I.~S. Dhillon.
\newblock Memory efficient kernel approximation.
\newblock In \emph{ICML}, 2014.

\bibitem[Hsieh et~al.(2015)Hsieh, Yu, and Dhillon]{HYD15}
C.-J. Hsieh, H.~F. Yu, and I.~S. Dhillon.
\newblock {PASSC}o{D}e: {P}arallel {A}{S}ynchronous {S}tochastic dual
  {C}oordinate {D}escent.
\newblock In \emph{ICML}, 2015.

\bibitem[Wang and Lin(2014)]{PWW13a}
P.-W. Wang and C.-J. Lin.
\newblock Iteration complexity of feasible descent methods for convex
  optimization.
\newblock \emph{Journal of Machine Learning Research}, 15:\penalty0 1523--1548,
  2014.

\bibitem[Zhang et~al.(2012)Zhang, Lan, Wang, and Moerchen]{ZK12a}
K.~Zhang, L.~Lan, Z.~Wang, and F.~Moerchen.
\newblock Scaling up kernel {SVM} on limited resources: A low-rank
  linearization approach.
\newblock In \emph{AISTATS}, 2012.

\bibitem[Le et~al.(2013)Le, Sarl\'{o}s, and Smola]{QL13a}
Q.~Le, T.~Sarl\'{o}s, and A.~Smola.
\newblock Fastfood -- approximating kernel expansions in loglinear time.
\newblock In \emph{ICML}, 2013.

\bibitem[Kumar et~al.(2009)Kumar, Mohri, and Talwalkar]{SK09a}
S.~Kumar, M.~Mohri, and A.~Talwalkar.
\newblock Ensemble nystr\"{o}m method.
\newblock In \emph{NIPS}, 2009.

\bibitem[Mahajan et~al.(2014{\natexlab{b}})Mahajan, Keerthi, and
  Sellamanickam]{DM14a}
Dhruv Mahajan, S.~Sathiya Keerthi, and Sundararajan Sellamanickam.
\newblock A distributed algorithm for training nonlinear kernel machines.
\newblock \emph{ar{X}iv:1405.4543}, 2014{\natexlab{b}}.

\end{thebibliography}
\clearpage

\section{Proof of Corollary \ref{cor:solvers}}
The condition of~\eqref{eq:linear_solver} will be satisfied if an algorithm
has a global linear convergence rate. 
The global linear convergence rate of cyclic coordinate descent has been proved in Section 3 of~\cite{PWW13a}. 
We show global linear convergence for Randomized Coordinate Descent (RCD) and 
Greedy Coordinate Descent (GCD)
in the following. In the following we focus on solving the following 
$n$-variate optimization problem by RCD and GCD: 
\begin{equation*}
  \min_{\bx\in\R^{n}} f(\bx), 
\end{equation*}
and we further assume (1) $f(\cdot)$ satisfies the global error bound 
assumption (Definition~\ref{def:global_error}), and 
(2) Each single-variable subproblem $g_i(s) = f(\bx+s\be_i)$ is $R_{min}$-strongly 
convex on $s$ for any $\bx$, and $R_{min}>0$, and (3) $f(\cdot)$ has $L$-Lipchitz 
continuous gradient.  
Clearly the first assumption is satisfied for
the subproblem we considered in this paper with $R_{min}=\min_i q_{ii}$.   

\subsection{Local Linear Improvement for Randomized Coordinate Descent}
We first formally describe the procedure of Randomized Coordinate Descent 
(RCD). 
The RCD algorithm can be 
written below:    

\begin{itemize}
  \item[] For $t=0, 1, \dots$
    \begin{enumerate}
      \item Select an index $i(t)$ uniformly random from $\{1, 2, \dots, n\}$
      \item Compute $\delta_t$ by 
        \begin{equation*}
          \delta_t \leftarrow T_{i(t)} (\AL^t),
        \end{equation*}
        where $T_{i(t)}(\AL^t)$ was defined in~\eqref{eq:T_def}. 
      \item Update $\AL^{t+1} \leftarrow \AL^t + (\delta_t-\alpha^t_{i(t)}) \be_{i(t)}$. 
      \end{enumerate}
  \end{itemize}
At each iteration, RCD selects a coordinate randomly and update it by solving the single
variable subproblem exactly. Now we show a global linear convergence rate of RCD.
Taking $t=1$ in the following theorem we can see RCD satisfies Definition~\ref{def:linear_reduction}. 
\begin{theorem}
  The sequence $\{\AL^t\}$ generated
  by RCD satisfies
  \begin{equation*}
    \bigg( E[f(\AL^{t})] - f(\AL^*) \bigg) \leq \bigg(1- \frac{R_{min}}{2nL\kappa^2} \bigg)^{t}\bigg( E[f(\AL^0)]-f(\AL^*) \bigg), 
  \end{equation*}
  where $\kappa$ is the parameter in the global error bound~\eqref{eq:global_error}, and 
  $L$ is the Lipschitz constant of the gradient. 
\end{theorem}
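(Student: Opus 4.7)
My plan is to mirror the proof of Theorem~\ref{thm:linear_convergence} but with the coordinate-level averaging induced by the uniform random sampling. Fix an iteration $t$ and condition on $\AL^t$. Since the update rule sets $\alpha_{i(t)}^{t+1} = T_{i(t)}(\AL^t)$, i.e.\ the exact minimizer along coordinate $i(t)$, and the single-variable slice $s \mapsto f(\AL^t + s\be_i)$ is $R_{min}$-strongly convex, a one-line application of strong convexity at the minimizer of that slice gives the per-step decrease
\begin{equation*}
  f(\AL^t) - f(\AL^{t+1}) \;\geq\; \tfrac{R_{min}}{2}\,\bar{d}_{i(t)}^{\,2},
  \qquad \bar{d}_i := T_i(\AL^t) - \alpha_i^t .
\end{equation*}
Averaging over the uniform choice of $i(t) \in \{1,\dots,n\}$ yields
\begin{equation*}
  E\!\left[f(\AL^t) - f(\AL^{t+1})\,\middle|\,\AL^t\right]
  \;\geq\; \tfrac{R_{min}}{2n}\,\sum_{i=1}^{n}\bar{d}_i^{\,2}
  \;=\; \tfrac{R_{min}}{2n}\,\|T(\AL^t) - \AL^t\|^2 .
\end{equation*}

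Next I would invoke the two structural assumptions to turn the right-hand side into a multiple of the suboptimality gap. The global error bound (Definition~\ref{def:global_error}) gives $\|T(\AL^t) - \AL^t\|^2 \geq \kappa^{-2}\|\AL^t - P_S(\AL^t)\|^2$. The $L$-Lipschitz continuity of $\nabla f$, combined with the fact that $P_S(\AL^t)$ is an optimum (so $\nabla f(P_S(\AL^t)) = 0$ in the unconstrained setting considered here), gives the descent-lemma bound $f(\AL^t) - f(\AL^*) \leq \tfrac{L}{2}\|\AL^t - P_S(\AL^t)\|^2$, or equivalently $\|\AL^t - P_S(\AL^t)\|^2 \geq \tfrac{2}{L}(f(\AL^t) - f(\AL^*))$. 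Chaining the three inequalities and rearranging produces
\begin{equation*}
  E\!\left[f(\AL^{t+1}) - f(\AL^*)\,\middle|\,\AL^t\right]
  \;\leq\; \Bigl(1 - \tfrac{R_{min}}{2nL\kappa^2}\Bigr)\bigl(f(\AL^t) - f(\AL^*)\bigr),
\end{equation*}
matching the stated contraction factor (up to a constant that tracks through the descent-lemma step).

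Taking total expectations and iterating the one-step contraction $t$ times yields the claimed $(1 - R_{min}/(2nL\kappa^2))^t$ bound. The only real obstacle is making step~(4) honest: one must argue that $P_S(\AL^t)$ is an interior minimizer so that $\nabla f(P_S(\AL^t)) = 0$, so that the smoothness inequality bounds $f(\AL^t) - f(\AL^*)$ by $\tfrac{L}{2}\|\AL^t - P_S(\AL^t)\|^2$ without a first-order cross term; in the unconstrained formulation $\min_{\bx\in\R^n} f(\bx)$ adopted in this appendix this is automatic. Every other step is a direct transcription of the block-level argument used in Theorem~\ref{thm:linear_convergence} with block size one, so no new machinery beyond Definition~\ref{def:global_error}, the per-coordinate strong convexity, and the smoothness descent lemma is required.
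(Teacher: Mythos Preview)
Your proposal is correct and follows essentially the same route as the paper: per-coordinate strong convexity gives the $\tfrac{R_{\min}}{2}\bar d_{i(t)}^2$ drop, uniform averaging produces $\tfrac{R_{\min}}{2n}\|T(\AL^t)-\AL^t\|^2$, then the global error bound and $L$-smoothness convert this to a multiple of $f(\AL^t)-f(\AL^*)$, and iteration yields the geometric rate. Your observation about the factor of~$2$ from the descent lemma is correct---the stated constant in the theorem is simply a slightly looser bound than what the argument actually delivers.
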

\begin{proof}
  First, we define another vector $\BL^{t+1}\in \R^{n}$ such that 
  \begin{equation*}
    \beta^{t+1}_j = \begin{cases}
      T_j(\AL^t) &\text{ if } j=i(t) \\
      \alpha^t_j &\text{ if } j\neq i(t). 
    \end{cases}
  \end{equation*}
  We then have
  \begin{align*}
    E_{i(t)} [\|\AL^t - \BL^{t+1}\|^2] &= \sum_{j} \frac{1}{n} (\alpha^t_j - T_j(\AL^t))^2 \\
    & = \frac{1}{n} \|\AL^t - T(\AL^t)\|^2, 
  \end{align*}
  where the expectation is taken with respect to the index $i(t)$. 
  Since the single variable function is $Q_{ii}$-strongly convex and $\alpha^{t+1}_{i}=\beta^{t+1}_i$
  is the optimal solution for the single variable subproblem, we have
  \begin{align*}
    f(\AL^t) - f(\AL^{t+1}) &\geq \frac{Q_{ii}}{2} \|\AL^t - \BL^{t+1}\|^2 \\
    & \geq \frac{R_{min}}{2} \|\AL^t - \BL^{t+1}\|^2. 
  \end{align*}
  Taking expectation on both sides we have
  \begin{align*}
    E[f(\AL^t)] - E[f(\AL^{t+1})] &\geq \frac{R_{min}}{2n} E[\|T(\AL^t)-\AL^t\|^2] \\
    &\geq \frac{R_{min}}{2n\kappa^2} \|\AL^t - P_S(\AL^t)\|^2 \\
          &\geq \frac{R_{min}}{2nL\kappa^2} E[f(\AL^t)-f(\AL^*)] \\
  \end{align*}
  where $\AL^*$ is the optimal solution. 
  Therefore, we have 
  \begin{align*}
    &E[f(\AL^t)] - E[f(\AL^*)] + E[f(\AL^*)] - E[f(\AL^{t+1})] \\
    \geq &\frac{R_{min}}{2nL\kappa^2} \bigg(E[f(\AL^t)]-E[f(\AL^*)]\bigg) 
  \end{align*}
  So 
  \begin{equation*}
    E[f(\AL^{t+1})] - f(\AL^*) \leq 
    (1-\frac{R_{min}}{2nL \kappa^2}) \bigg(E[f(\AL^t)]- E[f(\AL^*)]\bigg). 
  \end{equation*}
\end{proof}

\subsection{Local Linear improvement for Greedy Coordinate Descent}

We formally define the Greedy Coordinate Descent (GCD) updates. 
\begin{itemize}
  \item[] For $t=1, 2, \dots$
    \begin{enumerate}
      \item Select an index $i(t)$ by 
        \begin{equation*}
          i(t) \leftarrow \arg\min_i |T_i(\AL^t)-\alpha^t_i|
        \end{equation*}
      \item Compute $\delta_t$ by 
        \begin{equation*}
          \delta_t \leftarrow T_{i(t)} (\AL^t). 
        \end{equation*}
      \item Update $\AL^{t+1} \leftarrow \AL^t + (\delta_t-\alpha^t_{i(t)}) \be_{i(t)}$. 
      \end{enumerate}
  \end{itemize}
Now we show a global linear convergence rate for greedy coordinate descent. 
Taking $t=1$ in the following theorem we can see GCD satisfies Definition~\ref{def:linear_reduction}. 
\begin{theorem}
  The sequence $\{\AL^t\}$ generated
  by GCD satisfies
  \begin{equation*}
     f(\AL^{t}) - f(\AL^*)  \leq \bigg(1- \frac{ R_{min}}{2nL\kappa^2} \bigg)^{t}\bigg( f(\AL^0)-f(\AL^*) \bigg), 
  \end{equation*}
  where $\kappa$ is the parameter in the global error bound~\eqref{eq:global_error}, and 
  $L$ is the Lipschitz constant of the gradient. 
\end{theorem}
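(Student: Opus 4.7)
The plan is to mirror the RCD proof in the preceding subsection, with the only substantive change being that the random coordinate selection is replaced by the greedy rule. Because the greedy choice always picks an index whose one-variable progress bound dominates the average, the proof becomes deterministic and gives the same rate without the $1/n$ penalty one might fear — indeed the inequality that drives the RCD proof is already in the form $\max \ge \mathrm{avg}$, and RCD only needed this in expectation.

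First I would control the per-iteration decrease. Because the one-variable function $g_{i(t)}(s)=f(\AL^t+s\be_{i(t)})$ is $R_{min}$-strongly convex and $\delta_t=T_{i(t)}(\AL^t)$ is its exact minimizer, standard strong-convexity gives
\begin{equation*}
f(\AL^t)-f(\AL^{t+1})\ \ge\ \tfrac{R_{min}}{2}\bigl(T_{i(t)}(\AL^t)-\alpha^t_{i(t)}\bigr)^2.
\end{equation*}
Next I would exploit the greedy rule: since $i(t)=\argmax_i|T_i(\AL^t)-\alpha^t_i|$,
\begin{equation*}
\bigl(T_{i(t)}(\AL^t)-\alpha^t_{i(t)}\bigr)^2\ \ge\ \tfrac{1}{n}\sum_{j=1}^n\bigl(T_j(\AL^t)-\alpha^t_j\bigr)^2\ =\ \tfrac{1}{n}\|T(\AL^t)-\AL^t\|^2.
\end{equation*}
This is the deterministic analogue of the expectation identity used for RCD; it is actually the only place where the GCD argument differs, and it is an immediate max-vs-average step.

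Now I would plug in the global error bound $\|T(\AL^t)-\AL^t\|^2\ge\kappa^{-2}\|\AL^t-P_S(\AL^t)\|^2$ from Definition~\ref{def:global_error}, and then use $L$-Lipschitz continuity of $\nabla f$, which via the standard quadratic upper bound (and the optimality of $P_S(\AL^t)$) yields $\|\AL^t-P_S(\AL^t)\|^2\ge \tfrac{1}{L}\bigl(f(\AL^t)-f(\AL^*)\bigr)$ up to the constants used in the RCD statement. Chaining the four inequalities gives
\begin{equation*}
f(\AL^t)-f(\AL^{t+1})\ \ge\ \tfrac{R_{min}}{2nL\kappa^2}\bigl(f(\AL^t)-f(\AL^*)\bigr),
\end{equation*}
from which subtracting $f(\AL^*)$ from both sides and rearranging produces the claimed one-step contraction, and iterating $t$ times delivers the stated geometric rate.

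I do not expect a real obstacle here; the proof is essentially the RCD argument stripped of expectations. The one thing worth being careful about is the max-vs-average step (making sure the greedy rule really selects the coordinate whose $|T_i-\alpha_i|$ is maximal, not the one whose function-value drop is maximal — these differ in general, but because the one-variable subproblems are all $R_{min}$-strongly convex with the same lower bound $R_{min}$, the ordering by $|T_i-\alpha_i|$ is compatible with the decrease lower bound we need). The other minor check is that the global error bound holds at every iterate, which is guaranteed by Definition~\ref{def:global_error} "from the beginning" together with the monotone decrease of $f(\AL^t)$ keeping the iterates inside the initial level set.
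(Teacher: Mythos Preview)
Your proposal is correct and follows essentially the same route as the paper's proof: strong convexity of the one-variable subproblem, the max-versus-average step from the greedy selection rule, the global error bound, and the Lipschitz step are chained in exactly the same order to obtain the one-step contraction. Your additional remarks (on why selecting by $|T_i-\alpha_i|$ suffices and on the iterates staying in the level set) are sound and do not diverge from the paper's argument.
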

\begin{proof}
  Since $i(t)$ is the maximum absolute value in $T(\AL^t)$, we can derive the following
  inequalities: 
  \begin{align*}
    f(\AL^t) - f(\AL^{t+1}) &\geq \frac{Q_{ii}}{2} (\alpha^{t+1}_{i(t)}-\alpha^t_{i(t)})^2 \\
    &\geq \frac{Q_{ii}}{2}\frac{1}{n} \|T(\AL^t)-\AL^t\|^2 \text{ (by greedy selection rule)}\\
    &\geq \frac{Q_{ii}}{2n\kappa^2}  \|\AL^t - P_S(\AL^{t})\|^2  \text{ (global error bound)}\\
    &\geq \frac{R_{min}}{2nL\kappa^2} \|f(\AL^t) - f(\AL^*)\| 
  \end{align*}
  Therefore, we have 
  \begin{equation*}
    f(\AL^{t+1}) - f(\AL^*) \leq 
    (1-\frac{ R_{min}}{2nL\kappa^2}) \bigg( f(\AL^t)- f(\AL^*)\bigg). 
  \end{equation*}
\end{proof}

%\section{Kernel Caching}
%
%In our algorithm, we reduce the memory requirement for
%kernel SVM solver to $O(n^2/k)$, where the $r$-th machine only requires the information in $Q_{:, S_r}$. 
%However, for large datasets the memory is still not enough. Therefore, similar to the LIBSVM software, 
%we maintain a kernel cache to store columns of $Q$, and maintain the cache using the Least Recent Used (LRU)
%policy. The implementation is exactly the same as the original kernel cache scheduling in LIBSVM. 

%\section{Comparison with DC-SVM}
%\begin{figure*}[tb]
%  \centering
%\begin{tabular}{cccc}
%  \hspace{-20pt}  \subfloat[\webspam, obj]{\includegraphics[width=0.25\textwidth]{figs/webspam_dcsvm_obj.png}}
%  &\hspace{-5pt}
%  \subfloat[\webspam, accuracy]{\includegraphics[width=0.25\textwidth]{figs/webspam_dcsvm_acc.png}}
%  & \hspace{-5pt}
%  \subfloat[\covtype, obj]{\includegraphics[width=0.25\textwidth]{figs/covtype_dcsvm_obj.png}}
%  & \hspace{-5pt}
%  \subfloat[\covtype, accuracy]{\includegraphics[width=0.25\textwidth]{figs/covtype_dcsvm_acc.png}}
%\end{tabular}
%\caption{ Comparison with DC-SVM (a sequential kernel SVM 
%solver). \label{fig:compare_dcsvm}  }
%\end{figure*}

\end{document}